\theoremstyle{plain}
\newtheorem{theorem}{Theorem}[section]
\newtheorem{lemma}[theorem]{Lemma}
\newtheorem{corollary}[theorem]{Corollary}
\theoremstyle{definition}
\theoremstyle{remark}
\newenvironment{breakablealgorithm}
  {% \begin{breakablealgorithm}
   \begin{center}
     \refstepcounter{algorithm}% New algorithm
     \hrule height.8pt depth0pt \kern2pt% \@fs@pre for \@fs@ruled 画线
     \renewcommand{\caption}[2][\relax]{% Make a new \caption
       {\raggedright\textbf{\ALG@name~\thealgorithm} ##2\par}%
       \ifx\relax##1\relax % #1 is \relax
         \addcontentsline{loa}{algorithm}{\protect\numberline{\thealgorithm}##2}%
       \else % #1 is not \relax
         \addcontentsline{loa}{algorithm}{\protect\numberline{\thealgorithm}##1}%
       \fi
       \kern2pt\hrule\kern2pt
     }
  }{% \end{breakablealgorithm}
     \kern2pt\hrule\relax% \@fs@post for \@fs@ruled 画线
   \end{center}
  }
\title{LM-HT SNN: Enhancing the Performance of SNN to ANN Counterpart through Learnable Multi-hierarchical Threshold Model}
\author{Zecheng Hao\textsuperscript{\rm 1}, Xinyu Shi\textsuperscript{\rm 1,\rm 2}, Yujia Liu\textsuperscript{\rm 1}, Zhaofei Yu\textsuperscript{\rm 1,\rm 2}\footnotemark[1] \ \& Tiejun Huang\textsuperscript{\rm 1,\rm 2} \\
\textsuperscript{\rm 1} School of Computer Science, Peking University\\
\textsuperscript{\rm 2} Institute for Artificial Intelligence, Peking University
}
\begin{document}

\maketitle

\footnotetext[1]{Corresponding author: yuzf12@pku.edu.cn}

\begin{abstract}
Compared to traditional Artificial Neural Network (ANN), Spiking Neural Network (SNN) has garnered widespread academic interest for its intrinsic ability to transmit information in a more energy-efficient manner. However, despite previous efforts to optimize the learning algorithm of SNNs through various methods, SNNs still lag behind ANNs in terms of performance. The recently proposed multi-threshold model provides more possibilities for further enhancing the learning capability of SNNs. In this paper, we rigorously analyze the relationship among the multi-threshold model, vanilla spiking model and quantized ANNs from a mathematical perspective, then propose a novel LM-HT model, which is an equidistant multi-threshold model that can dynamically regulate the global input current and membrane potential leakage on the time dimension. The LM-HT model can also be transformed into a vanilla single threshold model through reparameterization, thereby achieving more flexible hardware deployment. In addition, we note that the LM-HT model can seamlessly integrate with ANN-SNN Conversion framework under special initialization. This novel hybrid learning framework can effectively improve the relatively poor performance of converted SNNs under low time latency. Extensive experimental results have demonstrated that our model can outperform previous state-of-the-art works on various types of datasets, which promote SNNs to achieve a brand-new level of performance comparable to quantized ANNs. Code is available at \url{https://github.com/hzc1208/LMHT_SNN}.
\end{abstract}

\section{Introduction}
Recognized as the third generation of artificial neural networks \cite{maas1997networks}, Spiking Neural Network (SNN) is increasingly receiving significant academic attention due to its enormous potential in biological plausibility and high energy efficiency. As the information transmission between the pre-synaptic and post-synaptic layers relies on the discrete spike signal, which will be only emitted when the membrane potential of the corresponding neuron exceeds the firing threshold, SNNs have a unique event-driven property compared to conventional Artificial Neural Network (ANN). By utilizing this property, researchers have pointed out that SNNs can achieve significant advantages in terms of energy consumption on neuromorphic hardware \cite{merolla2014million, davies2018loihi, pei2019towards}. Currently, SNNs have further fulfilled a role in multiple application scenarios including object detection \cite{Kim2020yolo}, natural language processing \cite{lv2023text}, and 3D recognition \cite{Lan2023efficient}.

Spatial-Temporal back-propagation (STBP) with surrogate gradients is currently the most mainstream supervised learning algorithm suitable for SNNs. Although previous works have attempted to further enhance the learning ability of SNNs by delving into various optimization strategies, including gradient adjustment \cite{li2021dspike, deng2022temporal, Guo2023RMP} and structural improvement \cite{zheng2021going, yao2022GLIF, wang2023ASGL, yao2023attention}, there is still a certain performance gap between ANNs and SNNs.

Recently, the STBP learning algorithm based on multi-threshold models \cite{Sun2022Ternary, wang2023MT} is considered as another potential way to improve the performance of SNNs. In this scenario, multiple levels of the firing threshold enable SNNs to transmit richer information at each time-step. Unfortunately, we think that current related works have not accurately recognized the mathematical essence of multi-threshold models as well as their relationship with ANNs and SNNs. In this paper, we innovatively propose a learnable multi-hierarchical and equidistant threshold model based on global input information, which is called LM-HT model. On the one hand, we note that our LM-HT model can equivalently represent the information of the vanilla model over multiple consecutive time-steps within a single step. Furthermore, we can convert the LM-HT model into a vanilla single threshold model through a layer-by-layer reparameterization scheme. On the other hand, the STBP method based on the LM-HT model can be transformed into the training modes of the vanilla STBP and quantized ANNs under different parameter initialization conditions, respectively. The main contribution of this work has been summarized as follows:
\begin{itemize}
    \item We point out that the essence of the equidistant multi-threshold model is to simulate the spike firing situation of the vanilla spiking model within specific time windows. Specially, when the input current follows a completely uniform distribution on the time dimension, its spike firing rate is mathematically equivalent to the activation output of quantized ANNs.
    \item We propose an advanced LM-HT model, which can enhance the performance of SNNs to the level of ANNs and be transformed into a vanilla single threshold model losslessly during the inference stage. By adopting different parameter initialization schemes, the LM-HT model can further establish a bridge between the vanilla STBP and quantized ANNs training.
    \item We further design a brand-new hybrid training framework based on the LM-HT model, which is enable to effectively improve the performance degradation problem of traditional ANN-SNN Conversion methods regardless of the time latency degree involved.
    \item Experimental results have indicated that our model can fulfill state-of-the-art learning performance for various types of datasets. For instance, we achieve the top-1 accuracy of 81.76\% for CIFAR-100, ResNet-19 within merely 2 time-steps.
\end{itemize}

\section{Related Works}
\textbf{STBP supervised training.} STBP is the most prevailing recurrent-mode learning algorithm in the field of SNN direct training. Wu \textit{et al.} \cite{wu2018STBP} tackled the non-differentiable problem existed in the spike firing process by utilizing surrogate gradients and achieved gradient smoothing calculation between layers. Deng \textit{et al.} \cite{deng2022temporal} and Guo \textit{et al.} \cite{Guo2023RMP} respectively proposed brand-new target loss functions by analyzing the temporal distribution of the spike sequence and membrane potential. Furthermore, various temporal-dependent batch normalization layers \cite{zheng2021going, duan2022TEBN, Guo2023MBPN} and advanced spiking neuron models \cite{yao2022GLIF, wang2023ASGL} have been pointed out, which enhances the capability and stability of SNN learning. The researchers also designed a variety of residual blocks \cite{fang2021deep, hu2021residual} and Transformer structures \cite{zhou2023spikformer, yao2023Transformer} suitable for SNNs, promoting the development of STBP training towards the domains of deep and large-scale models. In addition, some variant and extended learning methods based on STBP have also received widespread attention. Temporal Coding \cite{mostafa2017supervised} and Time-to-First-Spike (TTFS) \cite{kheradpisheh2020temporal} algorithm conduct one-time back-propagation based on the specific firing moment.
% which effectively reduces the training time cost while preserving temporal characteristics. 
Meng \textit{et al.} \cite{Meng2023SLTT} introduced the idea of online learning into vanilla STBP algorithm, which significantly saves training memory overhead by eliminating the gradient chains between different time-steps. Fang \textit{et al.} \cite{Fang2023PSN} proposed a spiking neuron model that supports parallel computing in forward propagation, which also provides inspiration for this work.

\textbf{ANN-SNN Conversion.} ANN-SNN Conversion is another widely used method for obtaining high-performance SNNs with limited computational resources, which establishes a mathematical mapping relationship between activation layers and the Integrate-and-Fire (IF) models. Cao \textit{et al.} \cite{cao2015spiking} first proposed a two-step conversion learning framework, which replaces the activation functions of pre-trained ANNs with the IF models layer by layer. On this basis, Han \textit{et al.} \cite{han2020rmp} and Li \textit{et al.} \cite{li2021free} classified and summarized the relevant errors existed in the conversion process. Deng \textit{et al.} \cite{deng2020optimal} and Bu \textit{et al.} \cite{bu2022optimal} further reduced the conversion errors through deriving the optimal values for the bias term and initial membrane potential. For the critical conversion error caused by uneven spike firing sequences, multiple optimization strategies have been proposed successively, including memorizing the residual membrane potential \cite{hao2023reducing}, firing negative spikes \cite{wang2022signed, li2022quantization}, calibrating offset spikes \cite{hao2023bridging} and hybrid finetuning training \cite{Wang2022hybrid}. Currently, ANN-SNN Conversion has been further applied to the training of large-scale visual and language models \cite{wang2023Transformer, lv2023text}.

\textbf{Spiking neural models with multi-threshold.} The current proposed multi-threshold models can be generally divided into two categories: one emits signed spikes \cite{Kim2020yolo, yu2022DT, wang2022signed}, while the other emits multi-bit spikes \cite{li2022efficient, Sun2022Ternary, wang2023MT, Lan2023efficient}. However, these works generally consider using multi-threshold models to reduce ANN-SNN Conversion errors and lack further theoretical analysis. In this paper, we have the foresight to recognize the mathematical equivalence relationship between equidistant multi-threshold models and quantized ANNs under the conditions of using the soft-reset mechanism and uniform input current, achieving the current optimal performance in the domain of STBP learning.

\section{Preliminaries}
\textbf{The spiking neuron models for SNNs.} The Leaky-Integrate-and-Fire (LIF) model is one of the most commonly used models in the current SNN community. The following equations have depicted the dynamic procedure of the LIF model in a discrete form:
\begin{align}
    & \bm{m}_{\textit{LIF}}^l(t) = \lambda_{\textit{LIF}}^l \bm{v}_{\textit{LIF}}^l(t-1) + \bm{I}^l(t),\ \bm{I}^l(t) = \bm{W}^l \bm{s}_{\textit{LIF}}^{l-1}(t) \theta^{l-1}. \label{eq01} \\
    & \bm{v}_{\textit{LIF}}^l(t) = \bm{m}_{\textit{LIF}}^l(t) - \bm{s}_{\textit{LIF}}^l(t) \theta^l,\ \bm{s}_{\textit{LIF}}^l(t) = \left\{
        \begin{aligned}
        &1,\! & \bm{m}_{\textit{LIF}}^l(t) \geq \theta^l \\
        &0,\! & \text{otherwise}
        \end{aligned}
    \right.. \label{eq02}
\end{align}
Eq.\eqref{eq01} describes the charging process: $\forall t\in [1, T]$, $\bm{m}_{\textit{LIF}}^l(t)$ and $\bm{v}_{\textit{LIF}}^l(t-1)$ respectively represent the membrane potential before and after the charging at the $t$-th time-step. $\bm{I}^l(t)$ denotes the input current and $\lambda_{\textit{LIF}}^l$ characterizes the leakage degree of the membrane potential. When $\lambda_{\textit{LIF}}^l=1$, the LIF model will degenerate into a more specialized model called the IF model. Eq.\eqref{eq02} depicts the reset and firing process: $\bm{s}_{\textit{LIF}}^l(t)$ indicates the spike emitting situation and $\theta^l$ is the firing threshold. Here we adopt the soft-reset mechanism, which means that the reset amplitude of the membrane potential is equal to the value of $\theta^l$.

\textbf{STBP learning algorithm for SNNs.} The gradient calculation mode of STBP is inspired by the back-propagation Through Time (BPTT) algorithm in Recurrent Neural Network (RNN), which will propagate along the spatial and temporal dimensions of SNNs simultaneously. Following equations have described the specific propagation process:
\begin{align}
     \frac{\partial \mathcal{L}}{\partial \bm{m}_{\textit{LIF}}^l(t \!-\! 1)} & = \underbrace{ \frac{\partial \mathcal{L}}{\partial \bm{s}_{\textit{LIF}}^l(t \!-\! 1)} \frac{\partial \bm{s}_{\textit{LIF}}^l(t \!-\! 1)}{\partial \bm{m}_{\textit{LIF}}^l(t \!-\! 1)} }_{\text{spatial dimension}} + \underbrace{ \frac{\partial \mathcal{L}}{\partial \bm{m}_{\textit{LIF}}^l(t)} \frac{\partial \bm{m}_{\textit{LIF}}^l(t)}{\partial \bm{v}_{\textit{LIF}}^l(t \!-\! 1)} \frac{\partial \bm{v}_{\textit{LIF}}^l(t \!-\! 1)}{\partial \bm{m}_{\textit{LIF}}^l(t \!-\! 1)} }_{\text{temporal dimension}} , \label{eq17}
\end{align}
Here $\mathcal{L}$ denotes the target loss function. From Eq.\eqref{eq02} one can note that the mathematical relationship between $\bm{s}_{\textit{LIF}}^l(t)$ and $\bm{m}_{\textit{LIF}}^l(t)$ is equivalent to $\bm{s}_{\textit{LIF}}^l(t) = H(\bm{m}_{\textit{LIF}}^l(t) - \theta^l)$, where $H(\cdot)$ denotes Heaviside step function. As Heaviside function is non-differentiable, researchers consider using a surrogate function, which is approximate to Heaviside function but differentiable, to handle the term $\frac{\partial \bm{s}_{\textit{LIF}}^l(t)}{\partial \bm{m}_{\textit{LIF}}^l(t)}$ in the back-propagation chain. For example, $\frac{\partial \bm{s}_{\textit{LIF}}^l(t)}{\partial \bm{m}_{\textit{LIF}}^l(t)} = \text{sign} \left( \left|\bm{m}_{\textit{LIF}}^l(t) - \theta^l \right| \leq \frac{\theta^l}{2} \right)$ describes the well-known rectangular surrogate function.

\textbf{Quantized ANNs.} The quantized ANN model is a widely used structure in the field of ANN-SNN Conversion. Compared to traditional ANNs, quantized ANNs usually use the following Quantization-Clip-Floor-Shift (QCFS) function \cite{li2021free, bu2022optimal} as their activation function:
\begin{align}
    \bm{a}^l &= \frac{\vartheta^l}{T_q}\text{clip}\left( \left\lfloor \frac{\bm{W}^l\bm{a}^{l-1}T_q + \varphi^l}{\vartheta^l} \right\rfloor, 0, T_q \right). \label{eq19}
\end{align}
Here $\bm{a}^l$ and $\varphi^l$ represent the activation output and shift factor, while $T_q$ and $\vartheta^l$ denote the quantization level and learnable scaling factor. If we set $T_q=T, \vartheta^l=\theta^l, \bm{a}^l=\sum_{t=1}^T\bm{s}_{\textit{IF}}^l(t)\theta^l/T, \bm{v}_{\textit{IF}}^l(0)=\varphi^l$, one can find that the so-called QCFS function actually simulate the average spike firing rate of the IF model (we set $\bm{r}_{\textit{IF}}^l(T_q)=\sum_{t=1}^{T_q}\bm{s}_{\textit{IF}}^l(t)\theta^l/T_q$) under the condition of the uniform input current and soft-reset mechanism. This conclusion suggests that SNNs have the potential to maintain the same level of performance as ANNs under specific conditions.

\begin{figure}[t] \centering  
\includegraphics[width=1\columnwidth]{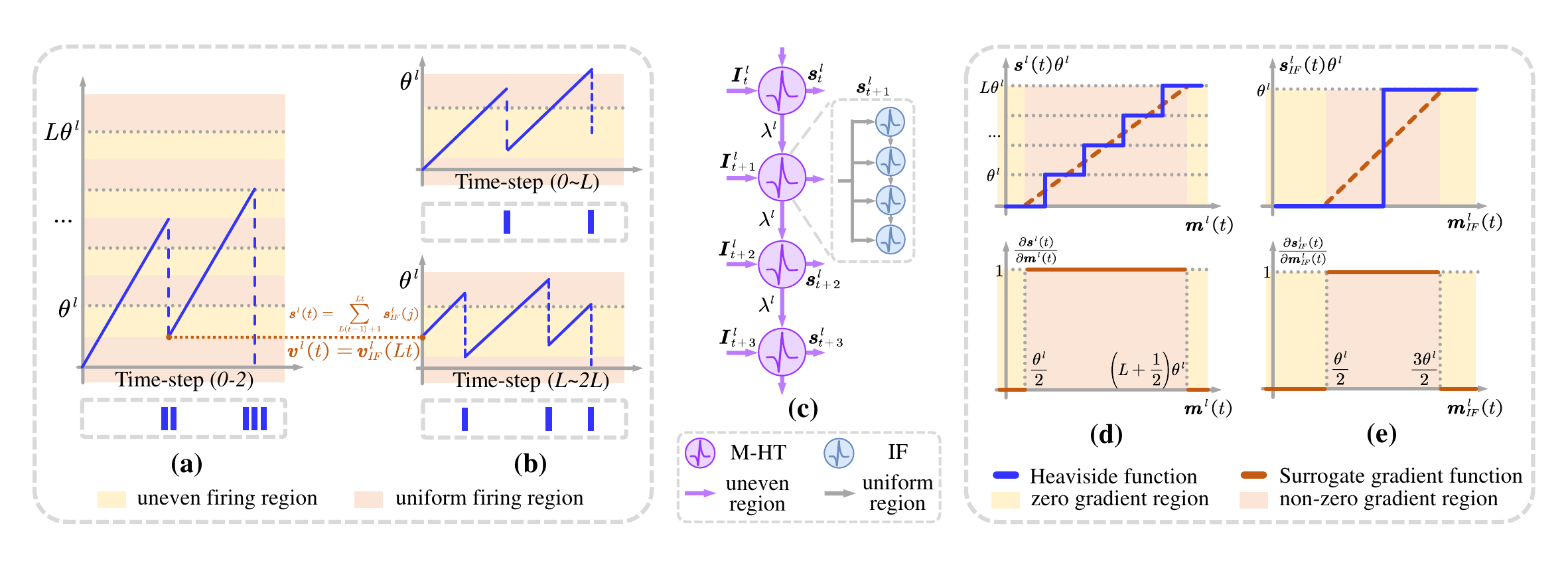} 
\caption{Forward and backward propagation of the M-HT model. (a)-(c): mathematical relationship between the M-HT model and vanilla IF model. (d)-(e): surrogate gradient calculation for the M-HT model.}
\label{fig01}      
\end{figure}

\section{Methodology}
\subsection{The Multi-hierarchical Threshold (M-HT) Model}
In this section, we first introduce the M-HT model, which has equidistant multi-level thresholds and will select the threshold closest to its current membrane potential at each time-step to achieve the process of firing spikes and resetting potential. Eqs.~\eqref{eq05}-\eqref{eq08} describe the dynamic equations of the M-HT model.
\begin{align}
    & \bm{m}^l(t) = \lambda^l \bm{v}^l(t-1) + \bm{I}^l(t),\ \bm{I}^l(t) = \bm{W}^l \bm{s}^{l-1}(t) \theta^{l-1}. \label{eq05} \\
    & \bm{v}^l(t) = \bm{m}^l(t) - \bm{s}^l(t) \theta^l,\ 
    \bm{s}^l(t) = \left\{
        \begin{aligned}
        &L,\! & \bm{m}^l(t) \geq L \theta^l \\
        &k,\! & k\theta^l \!\leq\! \bm{m}^l(t) \!<\! (k \!+\! 1)\theta^l, k=1,\! ..., L \!-\! 1 \\
        &0,\! & \text{otherwise}
        \end{aligned}
    \right.. \label{eq08}
\end{align}
Here $L$ denotes the number of level for the firing threshold. Regarding the surrogate gradient calculation of the M-HT model, similar to the vanilla spiking models, we propose $\frac{\bm{s}^l(t)}{\bm{m}^l(t)}=\text{sign}\left( \frac{1}{2}\theta^l \leq\bm{m}^l(t)\leq (L+\frac{1}{2})\theta^l \right)$, which covers a wider range of the membrane potential, as shown in Fig.\ref{fig01}(d)-(e).
As the M-HT model has $L$ different firing options at each time-step, we can consider the information transmitted by the M-HT model within one time-step as an information integration of the vanilla model for $L$ time-steps. Therefore, we attempt to bridge a mathematical equivalent relationship between the M-HT and IF model:
\begin{lemma}
$\forall t\in[1,T]$, if $\bm{v}^l(t-1)\in[0,\theta^l)$, the effect of inputting current $\bm{I}^l(t)$ into a M-HT model with $L$-level threshold at the $t$-th time-step, is equivalent to continuously inputting uniform current $\bm{I}^l(t) / L$ for $L$ time-steps into a IF model with $\bm{v}_{\textit{IF}}^l(0) = \bm{v}^l(t-1)$, i.e. $\bm{s}^l(t) = \text{clip}\left( \left\lfloor \frac{\bm{v}^l(t-1) + \bm{I}^l(t)}{\theta^l} \right\rfloor, 0, L \right) = \sum_{j=1}^L\bm{s}_{\textit{IF}}^{l}(j)$.
\label{lemma01}
\end{lemma}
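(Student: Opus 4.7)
The plan is to decompose the claim into two independent equalities that meet at the common closed form $\text{clip}(\lfloor(\bm{v}^l(t-1)+\bm{I}^l(t))/\theta^l\rfloor, 0, L)$. First I would verify that the one-step M-HT output equals this closed form by direct unpacking of Eq.~\eqref{eq08}; then I would show that the $L$-step IF rollout with constant input $\bm{I}^l(t)/L$ produces the same integer, via a charge-conservation telescope paired with a membrane-bound invariant. I read the M-HT dynamics with $\lambda^l=1$ here, consistent with comparing to an IF model (which is leak-free by definition).

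\textbf{Part (i): the M-HT side.} With $\lambda^l=1$, Eq.~\eqref{eq05} reduces to $\bm{m}^l(t)=\bm{v}^l(t-1)+\bm{I}^l(t)$, and I would match each branch of Eq.~\eqref{eq08} to the clipped floor in turn: the saturation branch $\bm{m}^l(t)\geq L\theta^l$ emits $L$ spikes, which the clip also caps at $L$; the mid-range branch $k\theta^l\leq\bm{m}^l(t)<(k+1)\theta^l$ for $1\leq k\leq L-1$ emits $k$, matching $\lfloor\bm{m}^l(t)/\theta^l\rfloor$; and the ``otherwise'' branch covering $\bm{m}^l(t)<\theta^l$ (including negative values) emits $0$, matching the clip lower bound. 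This portion is a purely formal verification.

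\textbf{Part (ii): the IF rollout.} Let $c=\bm{I}^l(t)/L$ and $N=\sum_{j=1}^L\bm{s}_{\textit{IF}}^l(j)$. Telescoping the soft-reset recursion gives the conservation identity $\bm{v}_{\textit{IF}}^l(L)=\bm{v}^l(t-1)+\bm{I}^l(t)-N\theta^l$, reducing the problem to controlling $\bm{v}_{\textit{IF}}^l(L)$ and the per-step spike cap. I would split on the size of $c$: (a) when $0\leq c\leq\theta^l$, induct on $j$ to show $\bm{v}_{\textit{IF}}^l(j)\in[0,\theta^l)$—because $\bm{m}_{\textit{IF}}^l(j)<2\theta^l$ allows at most one spike per step, and soft-reset keeps the residual in $[0,\theta^l)$—after which the conservation identity forces $N=\lfloor(\bm{v}^l(t-1)+\bm{I}^l(t))/\theta^l\rfloor\in[0,L]$, matching the clip; (b) when $c>\theta^l$, induction gives $\bm{v}_{\textit{IF}}^l(j-1)\geq 0$, so $\bm{m}_{\textit{IF}}^l(j)>\theta^l$ at every step and the IF fires each time, yielding $N=L$, which matches the clip ceiling since $\bm{v}^l(t-1)+\bm{I}^l(t)>L\theta^l$; (c) when $c<0$, the membrane is strictly decreasing and stays below $\theta^l$, so $N=0$, matching the clip floor since $\bm{v}^l(t-1)+\bm{I}^l(t)<\theta^l$.

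\textbf{Main obstacle.} The most delicate step is the invariant bookkeeping in case~(a): showing $\bm{v}_{\textit{IF}}^l(j)\in[0,\theta^l)$ survives one IF update, especially at boundary configurations such as $c=\theta^l$ or $\bm{v}_{\textit{IF}}^l(j-1)$ approaching $\theta^l$, requires careful tracking of strict versus non-strict inequalities in the firing rule of Eq.~\eqref{eq02}. Once that invariant is secured, the rest is telescoping plus the routine case split above.
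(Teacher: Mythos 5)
Your proposal is correct, and its essential ingredients match the paper's: the soft-reset telescoping (charge-conservation) identity and the invariant that the residual membrane potential stays in $[0,\theta^l)$ (the paper isolates this as a separate lemma and proves it by the same induction you sketch in case~(a)). The one organizational difference is how the two spike counts are identified: you compute each side explicitly down to the common closed form $\text{clip}\left(\left\lfloor(\bm{v}^l(t-1)+\bm{I}^l(t))/\theta^l\right\rfloor,0,L\right)$, whereas the paper subtracts the two telescoped recursions and argues by contradiction that a mismatch in integer spike counts would force the two residual potentials to differ by at least $\theta^l$, violating the invariant. A side effect of your route is that the IF-side equivalence is verified explicitly for out-of-range currents as well (your cases (b) and (c)), which the paper's general theorem only treats on the M-HT side while restricting its IF comparison to $\bm{I}^l(t)\in[0,L\theta^l)$; this makes your argument marginally more complete for the lemma as literally stated, at the cost of a longer case analysis.
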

Lemma \ref{lemma01} indicates that the M-HT model under a single time-step can be used to simulate the total number of spikes emitted by the IF model under uniform input current within $L$ consecutive time-steps. 
In addition, note that $\bm{s}^l(t)$ in Lemma \ref{lemma01} can also be calculated through $\text{clip}\left( \left\lfloor \cdot \right\rfloor, \cdot, \cdot \right)$, which is equivalent to the QCFS function mentioned before in quantized ANNs. 
%Furthermore, the back-propagation gradient of QCFS function is usually calculated with ClipReLU function, which is also consistent with the surrogate gradient function of the M-HT model mentioned before. 
The above conclusion preliminarily demonstrates that the M-HT model can achieve the same-level performance as pre-trained ANNs with $L$-level quantization under single-step condition.

\subsection{The Representation Ability of the M-HT Model on Multiple Time-steps}
\label{section01}
Based on Lemma \ref{lemma01}, we further consider the information representation of the M-HT model on multiple time-steps:
\begin{theorem}
When $\lambda^l = 1,\bm{v}^l(0)\in[0,\theta^l)$, for a M-HT model with $L$-level threshold, after $T$ time-steps, we will derive the following conclusions: \\
(i) If we further assume $\forall t\in[1,T], \bm{I}^l(t)\in[0,L\theta^l)$, we will have: $\forall t\in [1,T], \bm{s}^l(t) = \sum_{j=L(t-1)+1}^{Lt}\bm{s}_{\textit{IF}}^{l}(j), \bm{v}^l(t) = \bm{v}_{\textit{IF}}^l(Lt), \sum_{t=1}^T\bm{s}^l(t) = \sum_{j=1}^{LT}\bm{s}_{\textit{IF}}^{l}(j)$. \\
(ii) If we further assume $\bm{I}^l(1) = ... = \bm{I}^l(T)$, we will have: $\sum_{t=1}^T \bm{s}^l(t) = \text{clip}\left( \left\lfloor \frac{\bm{v}^l(0) + \sum_{t=1}^T \bm{I}^l(t)}{\theta^l} \right\rfloor, 0, LT \right)$. \\
Here the IF model has uniform input currents $\bm{I}^l(1)/L,...,\bm{I}^l(T)/L$ respectively within every $L$ steps and satisfies $\bm{v}_{\textit{IF}}^l(0) = \bm{v}^l(0)$.
\label{theorem01}
\end{theorem}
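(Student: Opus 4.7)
The plan is to prove part (i) by induction on $t$ using Lemma \ref{lemma01} at each step, and to prove part (ii) by combining part (i) with a short auxiliary fact about IF neurons under uniform input, followed by a case split on the magnitude of the common current.

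For part (i), I would maintain the inductive invariant that $\bm{v}^l(t) = \bm{v}_{\textit{IF}}^l(Lt)$ and $\bm{v}^l(t) \in [0,\theta^l)$. The base case $t=0$ holds by the standing hypothesis $\bm{v}^l(0)\in[0,\theta^l)$ together with the initialization $\bm{v}_{\textit{IF}}^l(0)=\bm{v}^l(0)$. For the inductive step, since $\bm{v}^l(t-1)\in[0,\theta^l)$, Lemma \ref{lemma01} applied to the M-HT update at time $t$ with input $\bm{I}^l(t)$ asserts that $\bm{s}^l(t)$ equals the total number of spikes emitted by $L$ consecutive IF steps under uniform current $\bm{I}^l(t)/L$ starting from the potential $\bm{v}^l(t-1)$. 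By the inductive hypothesis these $L$ IF steps are exactly time-steps $L(t-1)+1,\ldots,Lt$ of the global IF schedule, yielding $\bm{s}^l(t)=\sum_{j=L(t-1)+1}^{Lt}\bm{s}_{\textit{IF}}^l(j)$ and $\bm{v}^l(t)=\bm{v}_{\textit{IF}}^l(Lt)$. To close the induction I would use the hypothesis $\bm{I}^l(t)\in[0,L\theta^l)$ to note that $\bm{v}^l(t-1)+\bm{I}^l(t)<(L+1)\theta^l$, so the clip in Lemma \ref{lemma01} does not saturate; hence $\bm{v}^l(t)=\bm{v}^l(t-1)+\bm{I}^l(t)-\bm{s}^l(t)\theta^l\in[0,\theta^l)$. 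Summing the per-step identity over $t\in[1,T]$ gives the third conclusion.

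For part (ii), my plan has two ingredients. First I would establish an auxiliary lemma: for an IF model with $\lambda=1$, soft reset, $\bm{v}_{\textit{IF}}^l(0)\in[0,\theta^l)$, and a uniform current $c$ applied for $N$ steps, the total number of spikes equals $\text{clip}(\lfloor(\bm{v}_{\textit{IF}}^l(0)+Nc)/\theta^l\rfloor,0,N)$; this splits into $c\in[0,\theta^l)$ (where the residual potential is easily shown to remain in $[0,\theta^l)$, giving the raw floor identity), $c\geq\theta^l$ (every step fires, saturating the sum at $N$), and $c<0$ (no step ever fires). Second, I would case-split on the shared value $\bm{I}$. If $\bm{I}\in[0,L\theta^l)$, apply part (i) and then the auxiliary lemma with $c=\bm{I}/L$ and $N=LT$; if $\bm{I}\geq L\theta^l$, argue directly from the M-HT recurrence that every step saturates with $\bm{s}^l(t)=L$, giving total $LT$, and verify that the right-hand side also clips to $LT$; if $\bm{I}<0$, argue that the M-HT neuron never fires, giving total $0$, and verify that the clip returns $0$ because $\bm{v}^l(0)+T\bm{I}<\theta^l$.

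The main obstacle is the saturation regime of part (ii): Lemma \ref{lemma01} and hence part (i) cease to apply once $\bm{I}\notin[0,L\theta^l)$, because the M-HT residual potential leaves $[0,\theta^l)$ and the clean one-M-HT-step-to-$L$-IF-steps correspondence breaks down. The clip on the right-hand side of (ii) is precisely what absorbs these two extreme regimes, so the work there is a short boundary calculation rather than a structural identity; once the auxiliary IF lemma and the case split are in place, everything assembles cleanly from part (i).
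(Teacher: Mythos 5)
Your proposal is correct and matches the paper's argument in substance: the paper's auxiliary appendix lemma is exactly your invariant that the residual potential stays in $[0,\theta^l)$, part (i) is the same window-by-window correspondence (the paper phrases it via telescoped difference equations plus integrality of the spike counts rather than an explicit induction, but the content is identical), and part (ii) rests on the same telescoping-and-case-split on whether the common current lies in $[0,L\theta^l)$. The one caution is that the paper actually obtains Lemma 4.1 as the $T=1$ special case of Theorem 4.2, so if you invoke Lemma 4.1 inside your induction for (i) you must supply it with an independent one-window proof (which is immediate by the same residual-potential argument) to avoid circularity.
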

The proofs of Lemma \ref{lemma01} and Theorem \ref{theorem01} have been provided in the Appendix. From Theorem \ref{theorem01}(i) and Fig.\ref{fig01}(a)-(c), one can find that the M-HT model is actually equivalent to dividing the spike firing sequence of the IF model on consecutive $LT$ steps into $T$ $L$-step time windows. Combining with the soft-reset mechanism, the M-HT model actually focuses on a specific time window of the vanilla IF model at each time-step and maintains an equal membrane potential with the IF model at the end of each time window (\textit{i.e.} $\forall t\in[1,T], \bm{v}^l(t)=\bm{v}_{\textit{IF}}^l(Lt)$). The M-HT model follows the assumption of uniform input current within each window, while maintaining the basic calculation properties of spiking neurons between different windows. When the input current follows a complete uniform distribution, according to Theorem \ref{theorem01}(ii), the M-HT model can further simulate the output of an ANN with $LT$-level quantization.

\begin{figure}[t] \centering  
\includegraphics[width=1\columnwidth]{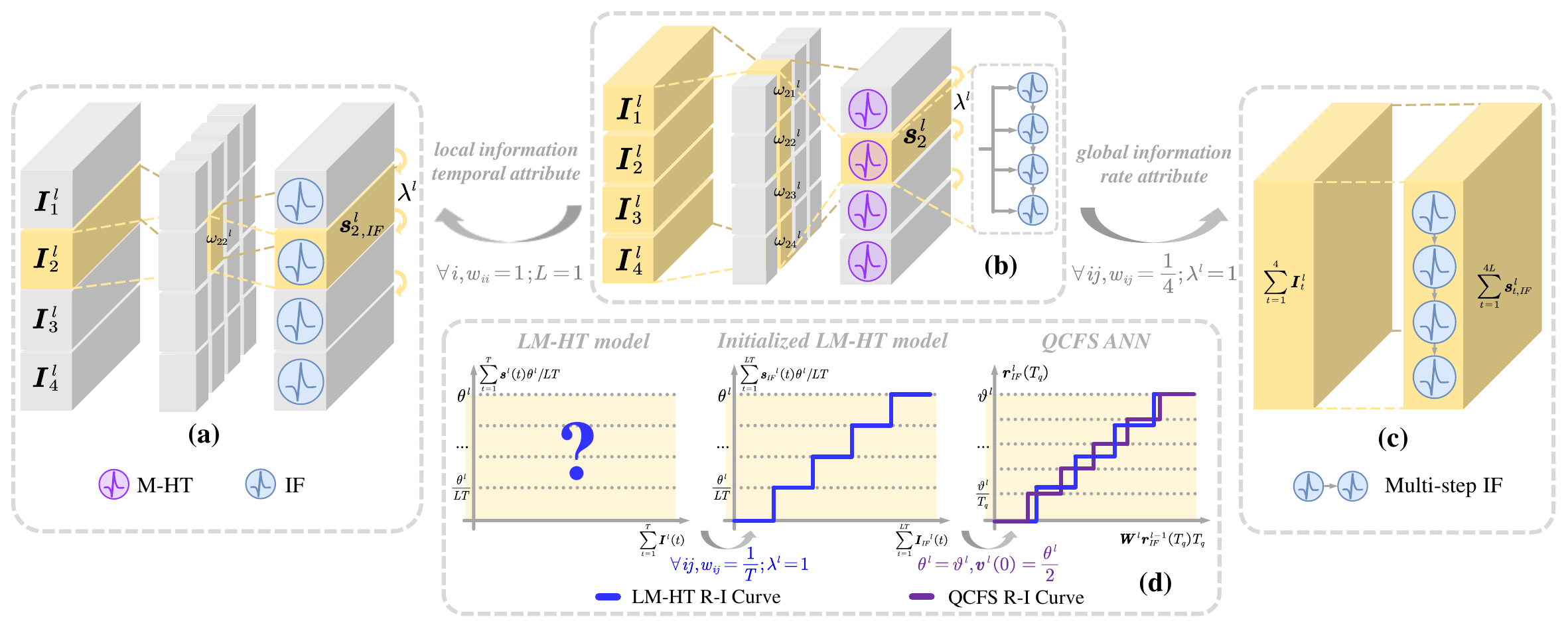} 
\caption{The STBP learning framework based on the LM-HT model. (a): vanilla STBP training. (b): STBP training with the LM-HT model. (c): direct training of quantized ANNs. (d): hybrid training with the LM-HT model, here R-I Curve denotes Rate-Input Curve.}
\label{fig02}
\end{figure}

\subsection{The Learnable Multi-hierarchical Threshold (LM-HT) Model}
\textbf{The uniform and uneven firing regions in the M-HT model.} For a specific spike firing rate, the M-HT model can often provide multiple spike firing sequences. For example, $[1,1],[0,2],[2,0]$ can all represent the situation where $2$ spikes are emitted within $2$ time-steps, while only $[1,1]$ can be viewed as a case of uniform firing situation. However, even when the input current is uniformly distributed, as the sum of spikes that cannot be divided by $L$ in $[0, LT]$ is unable to be represented by a uniform spike output sequence, there are still uneven firing situations: 
\begin{corollary}
If $\lambda^l=1, \bm{v}^l(0)=0$ and $\bm{I}^l(1) = ... = \bm{I}^l(T)$, for a M-HT model with $L$-level threshold, $\bm{s}^l(1) = ... = \bm{s}^l(T)$ is only satisfied when $\bm{I}^l(1)\in[k\theta^l, k\theta^l+\theta^l/T), \forall k=0,...,L-1$ or $\bm{I}^l(1)\in(-\infty, 0) \cup [L\theta^l, +\infty)$.
\label{corollary01}
\end{corollary}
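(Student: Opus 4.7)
The plan is to exploit the fact that under uniform input current and $\lambda^l = 1$, the dynamics of the M-HT model become entirely deterministic and transparent, so I can track the membrane potential and spike output step-by-step (componentwise). I would split the argument into three regimes based on the value of $I := \bm{I}^l(1) = \cdots = \bm{I}^l(T)$: the saturating regime $I \geq L\theta^l$, the sub-zero regime $I < 0$, and the interior regime $0 \leq I < L\theta^l$.

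The two saturating regimes are quick. If $I \geq L\theta^l$, then $\bm{m}^l(1) = I \geq L\theta^l$ yields $\bm{s}^l(1) = L$ and $\bm{v}^l(1) = I - L\theta^l \geq 0$; a short induction shows $\bm{v}^l(t) \geq 0$ and $\bm{m}^l(t+1) \geq L\theta^l$ persist, so $\bm{s}^l(t) = L$ for every $t$. If $I < 0$, then since $\bm{v}^l(0) = 0$ and each charge-fire step can only subtract non-negative multiples of $\theta^l$, the membrane stays non-positive, never crosses $\theta^l$, and so $\bm{s}^l(t) = 0$ for every $t$. Both cases give uniform firing, matching the two half-lines in the statement.

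The heart of the proof is the interior regime. I would write $I = k\theta^l + r$ with $k \in \{0,\ldots,L-1\}$ and $r \in [0,\theta^l)$, and prove by induction on $t$ that as long as the firing counts observed so far are all equal to $k$, the membrane satisfies $\bm{v}^l(t-1) = (t-1)r$; hence $\bm{m}^l(t) = k\theta^l + tr$, and the M-HT firing rule gives $\bm{s}^l(t) = k$ if and only if $tr < \theta^l$. Requiring this at every $t \in [1,T]$ yields the sufficient condition $r < \theta^l/T$, exactly the interval $I \in [k\theta^l,\,k\theta^l + \theta^l/T)$. For the converse, if $r \geq \theta^l/T$, I would set $t^\star := \lceil \theta^l/r \rceil \leq T$; the same induction carries through for $t < t^\star$, and at step $t^\star$ the computation gives $\bm{m}^l(t^\star) = k\theta^l + t^\star r \geq (k+1)\theta^l$, so $\bm{s}^l(t^\star) \geq k+1 > k = \bm{s}^l(1)$ and uniformity fails.

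The main technical subtlety, and the step I expect to be most careful about, is the converse in the interior regime: I need the inductive hypothesis $\bm{v}^l(j) = jr$ with $jr < \theta^l$ to remain valid for every $j < t^\star$, so that no earlier step inadvertently fires $k+1$ spikes first and invalidates the closed-form membrane expression. This is exactly what $(t^\star - 1)r < \theta^l$ guarantees by the definition of the ceiling, but it is the one place where an off-by-one slip could corrupt the whole argument, so I would state the induction hypothesis explicitly and verify the $j = t^\star - 1$ boundary in a separate line.
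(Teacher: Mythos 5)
Your proof is correct, but it takes a more hands-on route than the paper's. The paper treats Corollary~4.3 as a genuine corollary of Theorem~4.2(ii): for sufficiency it applies the prefix identity $\sum_{t=1}^{T'}\bm{s}^l(t)=\text{clip}\left( \left\lfloor \frac{\bm{v}^l(0)+\sum_{t=1}^{T'}\bm{I}^l(t)}{\theta^l} \right\rfloor,0,LT'\right)$ to every prefix length $T'\in[1,T]$, obtains $\sum_{t=1}^{T'}\bm{s}^l(t)=kT'$, and reads off $\bm{s}^l(t)=k$ by differencing consecutive prefixes; for necessity it argues that outside the stated intervals the total spike count is not of the form $kT$, so the outputs cannot all be equal. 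You instead re-derive the dynamics from scratch: writing $I=k\theta^l+r$ and tracking $\bm{v}^l(t)=tr$ is essentially an unrolled, uniform-current special case of the argument behind Theorem~4.2(ii), and your converse pinpoints the first step $t^\star=\lceil\theta^l/r\rceil$ at which the output jumps to at least $k+1$, rather than reasoning about the total. Your version is longer but more informative --- it locates exactly where uniformity breaks, and it sidesteps a detail the paper leaves implicit, namely ruling out the possibility that all outputs equal $L$ (total $LT$), which the counting argument only excludes because $\lfloor T\bm{I}^l(1)/\theta^l\rfloor<LT$ whenever $\bm{I}^l(1)<L\theta^l$. The paper's version is shorter because it reuses the already-proven Theorem~4.2(ii) as a black box. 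Both arguments ultimately rest on the same fact: under uniform current and soft reset, the residual potential accumulates linearly as $tr$ until it crosses $\theta^l$.
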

The proof is provided in the Appendix. From Corollary \ref{corollary01}, we can divide the input current into uniform and uneven firing regions according to the corresponding intervals, as shown in Fig.\ref{fig01}(a)-(b). Note that the uneven spike sequences emitted by the $l$-th layer may further cause the input current of the $l+1$-th layer to no longer follow the uniform distribution. That is to say, as the number of layers increases, the uneven firing cases will tend to increase gradually without introducing extra regulation.

\textbf{Learnable Temporal-Global Information Matrix and leaky parameters.} Enhancing the uniform firing pattern can promote SNNs to achieve superior performance similar to quantized ANNs, while uneven spike sequences retain more temporal and biological characteristics. Therefore, how to comprehensively utilize these two spike firing patterns becomes a critical problem. To address this issue, we first introduce the concept of Temporal-Global Information Matrix (T-GIM):
\begin{align}
    \forall t\in[1,T], \bm{I}^l(t) = \sum\limits_{j=1}^T \omega_{tj}^l \bm{W}^l \bm{s}^{l-1}(j) \theta^{l-1}. \label{eq09}
\end{align}
Here $\omega_{tj}^l$ is the element at row $t$ and column $j$ of the T-GIM $\bm{\Omega}^l$, $\bm{\Omega}^l\in \mathbb{R}^{T\times T}$. As shown in Eq.(\ref{eq09}) and Fig.\ref{fig02}(b), this brand-new input current adopts a multi-step current weighting form, allowing the model to simultaneously focus on the global information along the time dimension. Note that the new input current will follow a uniform distribution when $\forall i,j\in[1, T], \omega_{ij}^l=\frac{1}{T}$ and degrade to the vanilla input current when $\bm{\Omega}^l = \text{diag}(1,...,1)$. For the first case mentioned above, if we further add the condition $\lambda^l=1$, according to Theorem \ref{theorem01}(ii), one can find that the output of the model will be consistent with the activation output of a $LT$-level quantized ANN layer by layer, as shown in Fig.\ref{fig02}(b)-(c). For the second case, when $L=1$, the model will degenerate into vanilla LIF model, as shown in Fig.\ref{fig02}(a)-(b). 

To enable the model to dynamically adjust the above calculation process, we set both $\bm{\Omega}^l$ and $\lambda^l$ as learnable parameters. The initial values of $\bm{\Omega}^l$ and $\lambda^l$ are set to $1/T$ and $1$, respectively. During the training process, we choose the Sigmoid function $\sigma(\cdot)$ to control the parameters for fulfilling smooth gradient updates within a bounded learning range. We call this novel model as Learnable Multi-hierarchical Threshold (LM-HT) Model, which combines T-GIM and learnable attributes. We think the LM-HT model can regulate its spike firing pattern more flexibly and reasonably. 

Since we can regulate the computational relationships between different time-steps through learnable $\bm{\Omega}^l$ and $\lambda^l$ in the LM-HT model, during the back-propagation process, unlike Eq.\eqref{eq17}, we detach the term $\frac{\partial \mathcal{L}}{\partial \bm{m}^l(t)}\frac{\partial \bm{m}^l(t)}{\partial \bm{v}^l(t-1)}\frac{\partial \bm{v}^l(t-1)}{\partial \bm{m}^l(t-1)}$ from the gradient calculation graph, thereby reducing redundant calculations and completely leaving the gradient propagation between different time-steps to $\bm{\Omega}^l$ and $\lambda^l$ for control. The back-propagation calculation chains for the LM-HT model have been described as follow. Here $\odot$ denotes the Hadamard product.
\begin{align}
    & \frac{\partial \mathcal{L}}{\partial \bm{m}^l(t)} = \frac{\partial \mathcal{L}}{\partial \bm{s}^l(t)} \frac{\partial \bm{s}^l(t)}{\partial \bm{m}^l(t)},\ \frac{\partial \bm{s}^l(t)}{\partial \bm{m}^l(t)} = \text{sign}\left( \frac{1}{2}\theta^l \leq\bm{m}^l(t)\leq \left( L+\frac{1}{2} \right) \theta^l \right). \label{eq10} \\
    & \frac{\partial \mathcal{L}}{\partial \lambda^l} = \sum_{t=1}^T \frac{\partial \mathcal{L}}{\partial \bm{m}^l(t)} \odot \bm{v}^l(t-1),\ \frac{\partial \mathcal{L}}{\partial \omega_{ij}^l} = \frac{\partial \mathcal{L}}{\partial \bm{m}^l(i)} \odot \left( \bm{W}^l \bm{s}^{l-1}(j) \theta^{l-1} \right). \label{eq11}
\end{align}

\begin{figure}[t] \centering  
\includegraphics[width=1\columnwidth]{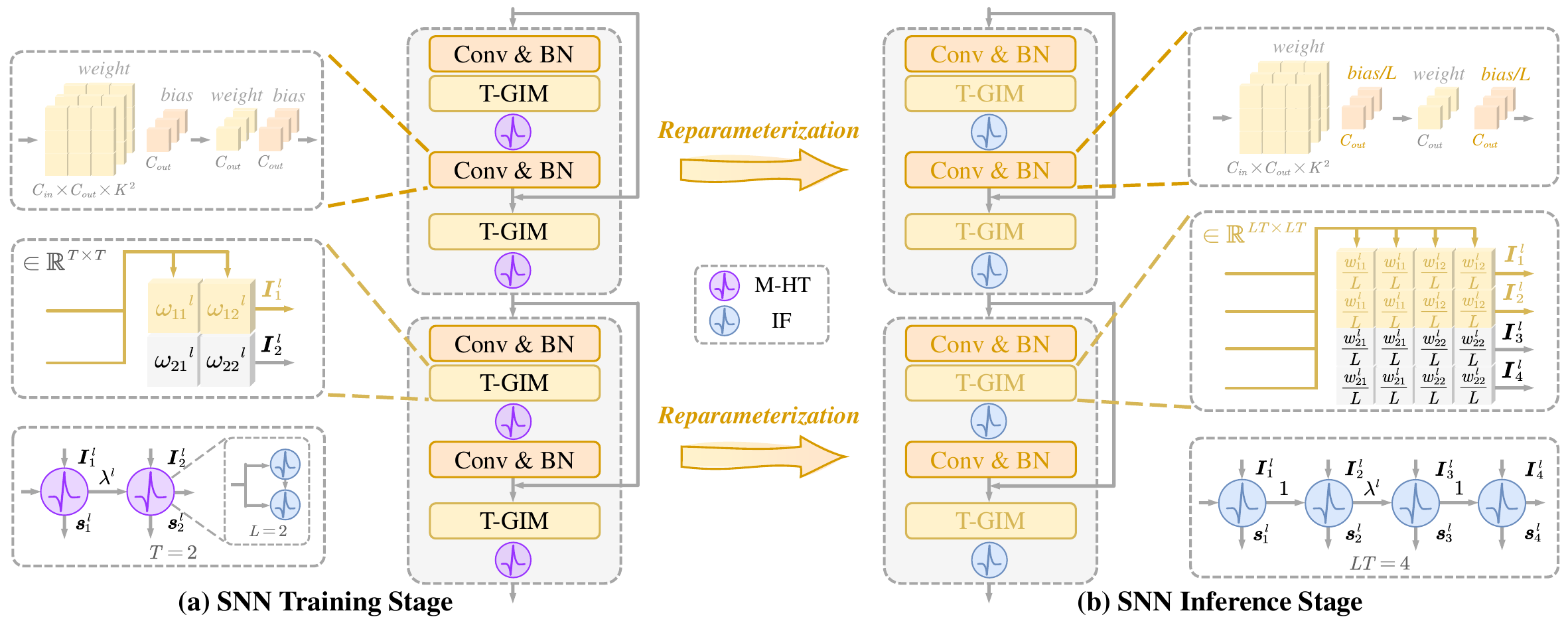} 
\caption{Reparameterization procedure of the LM-HT model.}
\label{fig03}      
\end{figure}

\subsection{Hybrid Training based on the LM-HT Model}
Although the traditional ANN-SNN Conversion frameworks have much lower computational overhead than STBP training algorithm, a serious performance degradation phenomenon often exists on the converted SNNs under low time latency \cite{hao2023reducing}. To address this problem, previous researchers \cite{rathi2020dietsnn} considered adopting STBP training for a few epochs on the pre-trained ANN models to enhance the performance of the converted SNNs under fewer time-steps, which is called as hybrid training. In this work, we propose a brand-new hybrid training framework based on the LM-HT model. 

We firstly choose QCFS function to train the quantized ANN models and then replace the QCFS function modules layer by layer with the LM-HT models under specific initialization ($\forall i,j\in[1,T], \omega_{ij}^l=\frac{1}{T}; \lambda^l=1, \theta^l=\vartheta^l, \bm{v}^l(0)=\frac{\theta^l}{2}$), as shown in Fig.\ref{fig02}(d). Combining with the conclusion pointed out by \cite{bu2022optimal}, one can note that the initialized LM-HT model and the QCFS function before substitution have an equivalence in terms of mathematical expectation, which has been described as the following theorem:
\begin{theorem}
When $\sum_{t=1}^T\bm{I}^l(t)/LT=\bm{W}^l\bm{r}_{\textit{IF}}^{l-1}(T_q)$ and $\sum_{t=1}^T\bm{I}^l(t)\in [0,LT\theta^l]$, if $\forall i,j\in[1,T], \omega_{ij}^l=\frac{1}{T}$ and $\lambda^l=1, \theta^l=\vartheta^l, \bm{v}^l(0)=\frac{\theta^l}{2}$, for $L, T, T_q$ with arbitrary values, we have: $\mathbb{E} \left( \frac{\sum_{t=1}^T\bm{s}^l(t)\theta^l}{LT} - \frac{\vartheta^l}{T_q}\text{clip}\left( \left\lfloor \frac{\bm{W}^l\bm{r}_{\textit{IF}}^{l-1}(T_q)T_q}{\vartheta^l} + \frac{1}{2} \right\rfloor, 0, T_q \right) \right) = 0$.
\label{theorem02}
\end{theorem}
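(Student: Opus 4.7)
The plan is to reduce Theorem~\ref{theorem02} to a zero-mean property of uniform nearest-integer rounding, via three reductions: collapsing the T-GIM to a uniform input current, invoking Theorem~\ref{theorem01}(ii) to get a closed-form for the total number of spikes, and matching the LM-HT and QCFS expressions as two different-resolution quantizations of the same scalar. First, I would observe that setting $\omega_{ij}^l=1/T$ makes $\bm{I}^l(t)=\sum_{j=1}^T\omega_{tj}^l\bm{W}^l\bm{s}^{l-1}(j)\theta^{l-1}$ in Eq.~\eqref{eq09} independent of $t$; combined with the hypothesis $\sum_t\bm{I}^l(t)/(LT)=\bm{W}^l\bm{r}_{\textit{IF}}^{l-1}(T_q)$, this forces $\bm{I}^l(t)\equiv L\bm{W}^l\bm{r}_{\textit{IF}}^{l-1}(T_q)$ for every $t$, so the uniform-current hypothesis of Theorem~\ref{theorem01}(ii) is met.

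Second, with $\lambda^l=1$ and $\bm{v}^l(0)=\theta^l/2\in[0,\theta^l)$, Theorem~\ref{theorem01}(ii) yields $\sum_{t=1}^T\bm{s}^l(t)=\text{clip}\bigl(\lfloor\tfrac{1}{2}+LT\,\bm{W}^l\bm{r}_{\textit{IF}}^{l-1}(T_q)/\theta^l\rfloor,\,0,\,LT\bigr)$, where the $[0,LT]$ range is enforced by the hypothesis $\sum_t\bm{I}^l(t)\in[0,LT\theta^l]$. Multiplying by $\theta^l/(LT)$ and using $\theta^l=\vartheta^l$, if I write $x=\bm{W}^l\bm{r}_{\textit{IF}}^{l-1}(T_q)/\vartheta^l\in[0,1]$ and introduce the uniform rounding quantizer $Q_N(x)=\tfrac{1}{N}\text{clip}(\lfloor Nx+\tfrac{1}{2}\rfloor,0,N)$, then the LM-HT firing rate is exactly $\vartheta^l Q_{LT}(x)$ while the QCFS term in the theorem is $\vartheta^l Q_{T_q}(x)$. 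The theorem therefore reduces to $\mathbb{E}[\,Q_{LT}(x)-Q_{T_q}(x)\,]=0$.

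Third, I would prove that $\mathbb{E}[Q_N(x)]=1/2$ for every positive integer $N$, under the implicit assumption (standard in \cite{bu2022optimal}) that $x$ is uniformly distributed on $[0,1]$. A bin-by-bin calculation shows that the $N-1$ interior bins of length $1/N$ contribute $\sum_{k=1}^{N-1}(k/N)\cdot(1/N)=(N-1)/(2N)$, while the two half-width boundary bins at $0$ and $1$ contribute $0$ and $1/(2N)$ respectively, summing to $1/2=\mathbb{E}[x]$. Applying this identity with $N=LT$ and with $N=T_q$ gives $\mathbb{E}[Q_{LT}(x)]=\mathbb{E}[Q_{T_q}(x)]$, and hence the claim.

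The main obstacle is semantic rather than computational: the theorem statement does not specify the distribution under which the expectation is taken, and the first two reductions above are purely deterministic identities, so all the genuine content sits in the final rounding step. To make the argument rigorous I would either state the uniform-pre-activation hypothesis on $x$ explicitly up front, or weaken the claim to a bin-conditional expectation, namely that on any quantization bin the LM-HT and QCFS outputs have a common mean equal to the bin center. The choice $\bm{v}^l(0)=\theta^l/2$ is precisely the half-bin offset that turns the floor in Theorem~\ref{theorem01}(ii) into unbiased rounding; any other initial potential would leave a residual of size $1/(2N)$ in the above computation and break the identity.
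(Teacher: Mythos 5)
Your proposal follows essentially the same route as the paper's proof: apply Theorem~\ref{theorem01}(ii) under the given initialization to obtain $\sum_{t=1}^T\bm{s}^l(t)=\text{clip}\left(\left\lfloor \sum_{t=1}^T\bm{I}^l(t)/\theta^l+\tfrac{1}{2}\right\rfloor,0,LT\right)$, and then reduce the claim to the zero-mean property of the difference of two shift-floor quantizers at resolutions $LT$ and $T_q$ evaluated at $\bm{x}^l=\bm{W}^l\bm{r}_{\textit{IF}}^{l-1}(T_q)$. The only substantive difference is that the paper simply cites \cite{bu2022optimal} for that expectation identity, whereas you prove it directly via the bin-by-bin computation $\mathbb{E}[Q_N(x)]=\tfrac{1}{2}$ under an explicit uniform-distribution assumption on $x$ --- which also correctly surfaces the fact that the theorem's expectation is only meaningful under such a distributional hypothesis, a point the paper leaves implicit.
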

Theorem \ref{theorem02} indicates that regardless of whether the time-steps we choose during the STBP training phase is equal to the inference steps simulated in ANN-SNN Conversion, the average spike firing rate of the LM-HT models under the initial state of STBP training maintains a mathematical equivalence with that simulated by the QCFS function modules in the previous stage. Therefore, under this new training framework, we can adopt STBP algorithm to optimize the inference performance of SNN under any degree of time latency. The detailed pseudo-code has been provided in the Appendix.

\subsection{Reparameterize the LM-HT model to vanilla LIF model}
As discussed in Section \ref{section01}, the mathematical essence of the LM-HT model is to simulate the spike firing situation of vanilla LIF neurons within each time window. Considering that the current neuromorphic hardware mainly supports single threshold models, we propose a reparameterization scheme that can transform the LM-HT model obtained during the training stage into a vanilla LIF model, which can further be deployed on hardware for inference.

As shown in Fig.\ref{fig03}, for a $L$-level LM-HT model within $T$ steps, we expand it into a vanilla LIF model within $LT$ steps, where the membrane leakage factor between different time windows is set to $\lambda^l$. In addition, T-GIM will be extended from $\mathbb{R}^{T\times T}$ to $\mathbb{R}^{LT\times LT}$ and the parameters are averaged within each $L\times L$ sub-region, ensuring that the input current meets the precondition in Theorem \ref{theorem01}. We also rectify the bias terms in synaptic layers, which involve addition operations at each time-step. By performing layer-by-layer reparameterization in the above manner, we will obtain a single threshold SNN model with theoretically lossless accuracy.

\begin{table}[t]
    \centering  
    \parbox{0.55\linewidth}{  
        \captionof{table}{Ablation study for the LM-HT model on a subset of ImageNet-1k.}
        \resizebox{\linewidth}{!}{
    	\begin{tabular}{cccccc}\toprule
            \textbf{Model} & \textbf{T-GIM} & \textbf{Arch.} & \textbf{Acc.(\%)} & \textbf{SOPs(G)} & \textbf{E.(mJ)}\\ \hline
            L=1,T=4 & w/o & \multirow{3}{*}{ResN-18} & 76.22 & 1.60 & 1.44 \\
            L=2,T=2 & w/o & & 80.52 & 1.08 & 0.97 \\
            L=2,T=2 & w/ & & 80.56 & 0.73 & 0.66 \\ \hline
    
            L=1,T=4 & w/o & \multirow{3}{*}{ResN-34} & 65.62 & 3.20 & 2.88 \\
            L=2,T=2 & w/o & & 82.18 & 2.42 & 2.18 \\
            L=2,T=2 & w/ & & 82.72 & 1.72 & 1.55 \\
            
            \bottomrule
    	\end{tabular}}
        \label{table03} 
    }  
    \hspace{1.0em}
    \parbox{0.37\linewidth}{
        \captionof{table}{Validation for the reparameterization procedure.} 
        \resizebox{\linewidth}{!}{
    	\begin{tabular}{cccc}\toprule
            \textbf{Arch.} & \textbf{Acc.(\%)} & \textbf{SOPs(G)} & \textbf{E.(mJ)}\\ \hline
            \multicolumn{4}{c}{Before reparameterization (L=2, T=2)} \\ \hline
            VGG-13 & 61.64 & 0.26 & 0.23 \\
            ResN-18 & 64.44 & 0.52 & 0.46 \\ \hline
            \multicolumn{4}{c}{After reparameterization (L=1, T=4)} \\ \hline
            VGG-13 & 61.66 & 0.26 & 0.23 \\
            ResN-18 & 64.50 & 0.52 & 0.46 \\
            
            \bottomrule
    	\end{tabular}} 
        \label{table05} 
    }  
\end{table}

\begin{table*}[ht]
    \caption{Comparison with previous state-of-the-art works.}
    \renewcommand\arraystretch{1.0}
	\centering
        \resizebox{0.95\linewidth}{!}{
	\begin{tabular}{cccccc}\toprule
        \textbf{Dataset} & \textbf{Method} & \textbf{Type} & \textbf{Architecture} & \textbf{Time-steps} & \textbf{Accuracy(\%)} \\ \hline

        \multirow{8}{*}{CIFAR-10} & STBP-tdBN \cite{zheng2021going} & Direct Training & ResNet-19 & 4 & 92.92 \\
        & Dspike \cite{li2021dspike} & Direct Training & ResNet-18 & 4 & 93.66 \\
        & TET \cite{deng2022temporal} & Direct Training & ResNet-19 & 4 & 94.44 \\
        & SLTT \cite{Meng2023SLTT} & Online Training & ResNet-18 & 6 & 94.44 \\
        & \multirow{2}{*}{GLIF \cite{yao2022GLIF}} & \multirow{2}{*}{Direct Training} & ResNet-18 & 2, 4, 6 & 94.15, 94.67, 94.88 \\
        & & & ResNet-19 & 2, 4, 6 & 94.44, 94.85, 95.03 \\
        & \multirow{2}{*}{\textbf{LM-HT (L=2)}} & \multirow{2}{*}{\textbf{Direct Training}} & \textbf{ResNet-18} & \textbf{2} & \textbf{96.25} \\ 
        & & & \textbf{ResNet-19} & \textbf{2} & \textbf{96.89} \\ 
        \hline

        \multirow{8}{*}{CIFAR-100} & Dspike \cite{li2021dspike} & Direct Training & ResNet-18 & 4 & 73.35 \\
        & TET \cite{deng2022temporal} & Direct Training & ResNet-19 & 4 & 74.47 \\
        & SLTT \cite{Meng2023SLTT} & Online Training & ResNet-18 & 6 & 74.38 \\
        & \multirow{2}{*}{GLIF \cite{yao2022GLIF}} & \multirow{2}{*}{Direct Training} & ResNet-18 & 2, 4, 6 & 74.60, 76.42, 77.28 \\
        & & & ResNet-19 & 2, 4, 6 & 75.48, 77.05, 77.35 \\
        & RMP-Loss \cite{Guo2023RMP} & Direct Training & ResNet-19 & 2, 4, 6 & 74.66, 78.28, 78.98 \\
        & \multirow{2}{*}{\textbf{LM-HT (L=2)}} & \multirow{2}{*}{\textbf{Direct Training}} & \textbf{ResNet-18} & \textbf{2} & \textbf{79.33} \\ 
        & & & \textbf{ResNet-19} & \textbf{2} & \textbf{81.76} \\ 
        \hline

        \multirow{6}{*}{ImageNet-200} & DCT \cite{Garg2020dct} & Hybrid Training & VGG-13 & 125 & 56.90 \\
        & Online-LTL \cite{yang2022tandem} & \multirow{2}{*}{Hybrid Training} & \multirow{2}{*}{VGG-13} & 16 & 54.82 \\
        & Offline-LTL \cite{yang2022tandem} & & & 16 & 55.37 \\
        & ASGL \cite{wang2023ASGL} & Direct Training & VGG-13 & 4, 8 & 56.57, 56.81 \\
        & \textbf{LM-HT (L=2)} & \multirow{2}{*}{\textbf{Direct Training}} & \multirow{2}{*}{\textbf{VGG-13}} & \textbf{2, 4} & \textbf{61.09, 61.75} \\ 
        & \textbf{LM-HT (L=4)} & & & \textbf{2} & \textbf{62.05} \\ \hline

        \multirow{7}{*}{ImageNet-1k} & STBP-tdBN \cite{zheng2021going} & Direct Training & ResNet-34 & 6 & 63.72 \\
        & TET \cite{deng2022temporal} & Direct Training & ResNet-34 & 6 & 64.79 \\
        & MBPN \cite{Guo2023MBPN} & Direct Training & ResNet-34 & 4 & 64.71 \\
        & RMP-Loss \cite{Guo2023RMP} & Direct Training & ResNet-34 & 4 & 65.17 \\
        & SEW ResNet \cite{fang2021deep} & Direct Training & ResNet-34 & 4 & 67.04 \\
        & GLIF \cite{yao2022GLIF} & Direct Training & ResNet-34 & 4 & 67.52 \\
        & \textbf{LM-HT (L=2)} & \textbf{Direct Training} & \textbf{ResNet-34} & \textbf{2} & \textbf{70.90} \\ \hline

        \multirow{6}{*}{CIFAR10-DVS} & STBP-tdBN \cite{zheng2021going} & Direct Training & ResNet-19 & 10 & 67.80 \\
        & Dspike \cite{li2021dspike} & Direct Training & ResNet-18 & 10 & 75.40 \\
        & MBPN \cite{Guo2023MBPN} & Direct Training & ResNet-19 & 10 & 74.40 \\
        & RMP-Loss \cite{Guo2023RMP} & Direct Training & ResNet-19 & 10 & 76.20 \\
        & \textbf{LM-HT (L=2)} & \multirow{2}{*}{\textbf{Direct Training}} & \multirow{2}{*}{\textbf{ResNet-18}} & \textbf{2, 4} & \textbf{80.70, 81.00} \\ 
        & \textbf{LM-HT (L=4)} & & & \textbf{2} & \textbf{81.90} \\
                
        \bottomrule
       
	\end{tabular}}
	\label{table01}
\end{table*}

\section{Experiments}
To validate the effectiveness of our proposed STBP and hybrid training frameworks based on the LM-HT model, we consider multiple static and neuromorphic datasets with different data scale, including CIFAR-10(100) \cite{Krizhevsky2009CIFAR100}, ImageNet-200(1k) \cite{Deng2009ImageNet} and CIFAR10-DVS \cite{li2017cifar10}. Consistent with the previous works, we also choose VGG \cite{Simonyan2014VGG16} and ResNet \cite{he2016deep} as the basic network architecture . 
We evaluate the computational overhead of SNNs based on the number of synaptic operations (SOPs) and the calculation standard for related energy consumption refers to \cite{zhou2023spikformer}.
In addition, as the information transmitted by our $L$-level LM-HT model within $T$ time-steps remains at the same level as that of the vanilla LIF model within $LT$ time-steps, to make a fair evaluation, we will compare the performance of the $L$-level LM-HT model within $T$ steps with that of the previous works within $LT$ steps.

\subsection{Ablation \& Validation Studies for the LM-HT Model}
As shown in Tab.\ref{table03}, we investigate the impact of threshold levels and T-GIM for our proposed model. One can note that vanilla IF neuron ($L=1, T=4$) is not well suited for deep networks (\textit{e.g.} ResNet-34) and causes relatively high energy consumption, while the M-HT series models ($L=2, T=2$) can effectively overcome the performance degradation problem on deep networks. When we further utilize T-GIM to regulate global information on the time dimension, the learning ability of our model is enhanced and the computational overhead in synaptic layers is significantly reduced.

We also validate the feasibility about the reparameterization procedure mentioned above. As shown in Tab.\ref{table05} and Fig.\ref{fig03}, by copying and reparameterizing the parameters of synapses, T-GIM and LM-HT neurons layer by layer, we obtain a single threshold model that maintained almost the same performance and power consumption as the original LM-HT model. This convertible property enables the LM-HT model to be more flexibly deployed on neuromorphic hardware.

\subsection{Comparison with Previous SoTA Works}
We first investigate the competitiveness of our proposed model in the domain of STBP learning. As shown in Tab.\ref{table01}, our comparative works incorporate previous state-of-the-art (SoTA) methods in various sub-domains of STBP training, including batchnorm layer optimization \cite{zheng2021going, Guo2023MBPN}, improved surrogate gradients \cite{li2021dspike}, learning function design \cite{deng2022temporal, Guo2023RMP}, energy-efficient training \cite{Garg2020dct, yang2022tandem, Meng2023SLTT} and advanced neuron models \cite{yao2022GLIF, wang2023ASGL}.

\textbf{CIFAR-10 \& CIFAR-100.} For conventional static datasets, one can find that our solution demonstrates significant performance advantages. For ResNet-18 structure, we achieve the top-1 accuracies of 96.25\% and 79.33\% with merely 2 time-steps on CIFAR-10 and CIFAR-100 datasets, respectively. For ResNet-19 network with a larger parameter scale, our method fulfills the precisions of 96.89\% and 81.76\% within 2 time-steps, which at least outperforms other corresponding works with 2.04\% and 3.48\% under the same time latency. In addition, it is worth noting that our above results have even exceeded the performance of other works with more time-steps (\textit{e.g.} 6 steps).

\textbf{ImageNet-200 \& ImageNet-1k.} For large-scale datasets, we also confirm the superiority of the LM-HT model. For the two-level LM-HT model, we respectively reach the top-1 accuracies of 61.09\% and 70.90\% within 2 time-steps on ImageNet-200 and ImageNet-1k datasets, which is 4.52\% higher than ASGL (4 steps) and 3.38\% higher than GLIF (4 steps) under the same-level time overhead. For a larger training time-step, one can note that our method will also demonstrate a significant advantage. For example, the two-level LM-HT model reaches the precision of 61.75\% with 4 time-steps, which has surpassed ASGL (8 steps) with 4.94\%.

\textbf{CIFAR10-DVS.} We also evaluate the effectiveness of our approach on neuromorphic datasets. Compared to other previous methods, our proposed model can achieve better results on shallower networks with fewer time-steps. For instance, the two-level LM-HT model can achieve the accuracy of 80.70\% after merely 2 time-steps.

\begin{table*}[t]
    \caption{The performance of hybrid training based on the LM-HT model for CIFAR-100 dataset.}
    \renewcommand\arraystretch{1.0}
	\centering
        \resizebox{\linewidth}{!}{
	\begin{tabular}{cc|cc|cc}\Xhline{0.5pt}
        \multirow{2}{*}{\textbf{Method}} & \multirow{2}{*}{\textbf{Time-steps}} & \multicolumn{2}{c|}{\textbf{VGG-16}} & \multicolumn{2}{c}{\textbf{ResNet-20}} \\ \cline{3-4} \cline{5-6}
         & & \textbf{ANN Acc.(\%)} & \textbf{SNN Acc.(\%)} & \textbf{ANN Acc.(\%)} & \textbf{SNN Acc.(\%)} \\ \hline
         
        RMP \cite{han2020rmp} & 32, 64, 128 & 71.22 & - , - , 63.76 & 68.72 & 27.64, 46.91, 57.69 \\ 
        SNM \cite{wang2022signed} & 32, 64, 128 & 74.13 & 71.80, 73.69, 73.95 & - & - \\
        SRP \cite{hao2023reducing} & 5, 6, 8 & 76.28 & 71.52, 74.31, 75.42 & 69.94 & 46.48, 53.96, 59.34 \\ \hline
        
        QCFS (\text{$T_q$}=4) \cite{bu2022optimal} & 2, 4, 8 & 76.11 & 63.33, 69.70, 74.12 & 63.90 & 38.04, 52.28, 61.77 \\
        
        \multirow{2}{*}{\textbf{LM-HT (L=2)}} & \textbf{2} & - & \textbf{75.97 (+6.27)} & - & \textbf{63.55 (+11.27)} \\
        & \textbf{4} & - & \textbf{76.49 (+2.37)} & - & \textbf{64.87 (+3.10)} \\ 
        \textbf{LM-HT (L=4)} & \textbf{2} & - & \textbf{76.38 (+2.26)} & - & \textbf{63.43 (+1.66)} \\ \hline
        
        QCFS (\text{$T_q$}=8) \cite{bu2022optimal} & 2, 4, 8 & 77.31 & 64.85, 70.50, 74.63 & 69.56 & 19.76, 34.17, 55.50 \\

        \multirow{2}{*}{\textbf{LM-HT (L=2)}} & \textbf{2} & - & \textbf{76.31 (+5.81)} & - & \textbf{67.08 (+32.91)} \\
        & \textbf{4} & - & \textbf{76.79 (+2.16)} & - & \textbf{69.00 (+13.50)} \\ 
        \textbf{LM-HT (L=4)} & \textbf{2} & - & \textbf{76.08 (+1.45)} & - & \textbf{67.21 (+11.71)} \\ 
        
        \Xhline{0.5pt}
	\end{tabular}}
	\label{table02}
\end{table*}

\subsection{Performance Analysis of Hybrid Training}
In our hybrid training framework, we first choose \cite{bu2022optimal} as the backbone for our ANN-SNN Conversion stage. Subsequently, we replace the QCFS function layer by layer with the initialized LM-HT model and conduct STBP training for merely 30 epochs. Furthermore, we also consider other advanced conversion methods \cite{han2020rmp, wang2022signed} and multi-stage error correction method \cite{hao2023reducing} as our comparative works.

As shown in Tab.\ref{table02}, after conducting the STBP fine-tuning optimization with relatively low computational overhead, we note that the performance of the converted SNNs under different quantization levels has been significantly improved and surpass other previous methods, especially under low time latency. For instance, compared to the ResNet-20 network after eight-level quantization (\textit{i.e.} $T_q$=8), the two-level LM-HT model has achieved a performance improvement of 32.91\% and 13.50\% with 2 and 4 time-steps, respectively.

%\subsection{Impact of Different Threshold Level}
%We also investigate the impact of different threshold level settings on the performance of the LM-HT model. As we previously discussed, the output of an $L$-level LM-HT model at a single step is equivalent to the output sum of a vanilla IF model within $L$ steps under the condition of inputting continuous uniform current. Therefore, for a given $LT$, the combination of larger $L$ and smaller $T$ corresponds to a higher uniform degree of the spike firing sequence and a more significant ANN-like learning attribute, while the configuration of smaller $L$ and larger $T$ will promote SNNs to capture more temporal information and dynamic features. 
%
%As shown in Tab.\ref{table01} and \ref{table02}, we explore both cases $L=2, T=4$ and $L=4, T=2$, whose $LT$ value all equals to 8. One can find that $L=4, T=2$ can achieve superior accuracy in the field of STBP direct training (61.77\% v.s. 61.56\%, 81.90\% v.s. 81.00\%), while $L=2, T=4$ reaches better results in terms of hybrid training (76.49\% v.s. 76.38\%, 76.79\% v.s. 76.08\%, 64.87\% v.s. 63.43\%, 69.00\% v.s. 67.21\%). This phenomenon indicates that a larger $L$ can further enhance the learning performance of SNNs, while a larger $T$ is conducive to a more reasonable gradient fine-tuning.

\section{Conclusions}
In this paper, we first investigate the mathematical equivalence among the multi-threshold model, vanilla spiking model and quantized ANNs, then propose an advanced STBP training method based on the LM-HT model, which has been proven to cover the representation range of vanilla STBP and quantized ANNs training frameworks, thereby promoting SNNs to achieve superior performance at the same level as quantized ANNs. Furthermore, the LM-HT model can achieve lossless transformation towards single threshold models or quantized ANNs under specific parameter configuration. Numerous experimental results have verified the effectiveness of our method. We believe that our work will further promote in-depth research on advanced spiking neural model.

\section*{Acknowledgements}
This work was supported by STI 2030-Major Projects 2021ZD0200300, the National Natural Science Foundation of China under Grant No. 62176003 and No. 62088102, and by Beijing Nova Program under Grant No. 20230484362.

\bibliographystyle{plain}
\bibliography{neurips_2024.bib}

\newpage

\appendix
\setcounter{equation}{0}
\setcounter{figure}{0}
\setcounter{table}{0}
\renewcommand{\thetable}{S\arabic{table}}
\renewcommand{\thefigure}{S\arabic{figure}}
\renewcommand{\theequation}{S\arabic{equation}}

\section{Appendix}
\subsection{Proof of Theorem}
\subsubsection{Proof of Lemma 4.1 \& Theorem 4.2}
Before the proof of Theorem \ref{theorem01}, we first need to introduce Lemma \ref{lemma:A01}:
\begin{lemma}
    Assume a continuous $T$-step input current $\bm{I}^l(1),...,\bm{I}^l(T)$, for a LM-HT model with $L$-level threshold, when $\forall t\in[1,T], \bm{I}^l(t)\in[0,L\theta^l)$ and $\bm{v}^l(0)\in[0,\theta^l), \lambda^l=1$, we will have $\bm{v}^l(T)\in[0,\theta^l)$.
    \label{lemma:A01}
\end{lemma}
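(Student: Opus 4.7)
My plan is to prove the statement by a straightforward induction on the time index $t$, showing that the invariant $\bm{v}^l(t)\in[0,\theta^l)$ is preserved at every step under the stated hypotheses. The base case $\bm{v}^l(0)\in[0,\theta^l)$ is given, so the entire proof reduces to the inductive step.

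For the inductive step, I would first use $\lambda^l=1$ together with the charging equation \eqref{eq05} to bound the pre-fire membrane potential: $\bm{m}^l(t)=\bm{v}^l(t-1)+\bm{I}^l(t)\in[0,\theta^l)+[0,L\theta^l)\subseteq[0,(L+1)\theta^l)$. The crucial observation is that this range is exactly the union of the $L+1$ firing regions specified in \eqref{eq08}, i.e.\ $[0,\theta^l)\cup\bigcup_{k=1}^{L-1}[k\theta^l,(k+1)\theta^l)\cup[L\theta^l,(L+1)\theta^l)$, with the last subinterval lying inside the saturation region $\bm{m}^l(t)\geq L\theta^l$.

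I would then check by cases that in every one of these regions the post-fire residual $\bm{v}^l(t)=\bm{m}^l(t)-\bm{s}^l(t)\theta^l$ is again in $[0,\theta^l)$: if $\bm{m}^l(t)\in[k\theta^l,(k+1)\theta^l)$ for some $k\in\{0,1,\dots,L-1\}$, then $\bm{s}^l(t)=k$ gives $\bm{v}^l(t)=\bm{m}^l(t)-k\theta^l\in[0,\theta^l)$; and if $\bm{m}^l(t)\in[L\theta^l,(L+1)\theta^l)$, then the saturation rule $\bm{s}^l(t)=L$ gives $\bm{v}^l(t)=\bm{m}^l(t)-L\theta^l\in[0,\theta^l)$. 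This closes the induction and yields $\bm{v}^l(T)\in[0,\theta^l)$.

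I do not anticipate any real obstacle here: the lemma is essentially a sanity check that the hypotheses of Theorem \ref{theorem01} are self-consistent in time, and the argument is pure case analysis on the piecewise-defined firing rule. The only subtlety worth being explicit about is that the upper bound $\bm{I}^l(t)<L\theta^l$ is strict and $\bm{v}^l(t-1)<\theta^l$ is strict, so $\bm{m}^l(t)$ cannot reach $(L+1)\theta^l$; this ensures the saturation branch $\bm{s}^l(t)=L$ is sufficient and no overflow into a would-be $(L+1)$-th level can occur. Once this is noted, the invariant propagates cleanly through all $T$ steps.
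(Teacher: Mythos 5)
Your proposal is correct and follows essentially the same route as the paper's own proof: induction on $t$, bounding $\bm{m}^l(t)=\bm{v}^l(t-1)+\bm{I}^l(t)\in[0,(L+1)\theta^l)$ and then observing that the soft reset returns the potential to $[0,\theta^l)$. The only difference is that you spell out the case analysis over the firing regions and the role of the strict upper bounds, which the paper leaves implicit.
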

\begin{proof}
    $\forall t\in[0, T)$, if $\bm{v}^l(t)\in[0, \theta^l)$, as $\bm{m}^l(t+1) = \bm{v}^l(t) + \bm{I}^l(t)$, we have $\bm{m}^l(t+1) \in [0, (L+1)\theta^l)$. Therefore, after the firing process $\bm{v}^l(t+1) = \bm{m}^l(t+1) - \bm{s}^l(t)\theta^l$, one can note that $\bm{v}^l(t+1)\in[0, \theta^l)$. According to the idea of mathematical induction, if we directly set $\bm{v}^l(0)\in[0,\theta^l)$, we can have $\bm{v}^l(T)\in[0,\theta^l)$.
\end{proof}

\textbf{Theorem 4.2.}
\textit{When $\lambda^l = 1,\bm{v}^l(0)\in[0,\theta^l)$, for a M-HT model with $L$-level threshold, after $T$ time-steps, we will derive the following conclusions: \\
(i) If we further assume $\forall t\in[1,T], \bm{I}^l(t)\in[0,L\theta^l)$, we will have: $\forall t\in [1,T], \bm{s}^l(t) = \sum_{j=L(t-1)+1}^{Lt}\bm{s}_{\textit{IF}}^{l}(j), \bm{v}^l(t) = \bm{v}_{\textit{IF}}^l(Lt), \sum_{t=1}^T\bm{s}^l(t) = \sum_{j=1}^{LT}\bm{s}_{\textit{IF}}^{l}(j)$. \\
(ii) If we further assume $\bm{I}^l(1) = ... = \bm{I}^l(T)$, we will have: $\sum_{t=1}^T \bm{s}^l(t) = \text{clip}\left( \left\lfloor \frac{\bm{v}^l(0) + \sum_{t=1}^T \bm{I}^l(t)}{\theta^l} \right\rfloor, 0, LT \right)$. \\
Here the IF model has uniform input currents $\bm{I}^l(1)/L,...,\bm{I}^l(T)/L$ respectively within every $L$ steps and satisfies $\bm{v}_{\textit{IF}}^l(0) = \bm{v}^l(0)$.}

\begin{proof}

    (i) If we consider the pre-condition in Theorem \ref{theorem01} and combine Eq.\eqref{eq01} with Eq.\eqref{eq02}, $\forall t \in [1,LT]$, we will have:
    \begin{align}
        \bm{v}_{\textit{IF}}^l(t) - \bm{v}_{\textit{IF}}^l(t-1) &= \bm{I}^l\left( \left\lceil \frac{t}{L} \right\rceil \right)/L - \bm{s}_{\textit{IF}}^l(t)\theta^l.
        \label{eq:A01}
    \end{align}
    Similarly, if we set $\lambda^l = 1$ and incorporate Eq.\eqref{eq05}, $\forall t \in [1,T]$, we will have:
    \begin{align}
        \bm{v}^l(t) - \bm{v}^l(t-1) &= \bm{I}^l(t) - \bm{s}^l(t)\theta^l.
        \label{eq:A02}
    \end{align}
    Then we accumulate Eq.\eqref{eq:A01} along the time dimension and obtain the following equation:
    \begin{align}
        \bm{v}_{\textit{IF}}^l(Lt) - \bm{v}_{\textit{IF}}^l\left( L(t-1) \right) &= \bm{I}^l(t) - \sum_{j=L(t-1)+1}^{Lt}\bm{s}_{\textit{IF}}^l(j)\theta^l.
        \label{eq:A03}
    \end{align}
    As $\bm{I}^l(t)\in [0,L\theta^l)$, according to Lemma \ref{lemma:A01}, when $\bm{v}^l(t-1)=\bm{v}_{\textit{IF}}^l\left( L(t-1) \right) \land \bm{v}^l(t-1)\in[0,\theta^l)$, we will have $\bm{v}^l(t) \in [0,\theta^l)$ and $\bm{v}_{\textit{IF}}^l(Lt) \in [0,\theta^l)$. Considering $\bm{s}^l(t), \sum_{j=L(t-1)+1}^{Lt}\bm{s}_{\textit{IF}}^l(j) \in \mathbb{N}$, if $\bm{s}^l(t)\neq\sum_{j=L(t-1)+1}^{Lt}\bm{s}_{\textit{IF}}^l(j)$, one can note that $|\sum_{j=L(t-1)+1}^{Lt}\bm{s}_{\textit{IF}}^l(j)\theta^l-\bm{s}^l(t)\theta^l|=|(\bm{v}^l(t) - \bm{v}^l(t-1))-(\bm{v}_{\textit{IF}}^l(Lt) - \bm{v}_{\textit{IF}}^l\left( L(t-1) \right))| = |\bm{v}^l(t) - \bm{v}_{\textit{IF}}^l(Lt)|\geq \theta^l$, which will violate the conclusion in Lemma \ref{lemma:A01}. Therefore, we can finally deduce that $\bm{s}^l(t)=\sum_{j=L(t-1)+1}^{Lt}\bm{s}_{\textit{IF}}^l(j)$. Then we can further have $\bm{v}^l(t) = \bm{v}_{\textit{IF}}^l(Lt)$ and $\sum_{t=1}^T\bm{s}^l(t) = \sum_{j=1}^{LT}\bm{s}_{\textit{IF}}^{l}(j)$.

    (ii) If we accumulate Eq.\eqref{eq:A02} along the time dimension and divide $\theta^l$ on both sides, we will have the following equation:
    \begin{align}
        \frac{\bm{v}^l(T) - \bm{v}^l(0)}{\theta^l} &= \frac{\sum_{t=1}^T\bm{I}^l(t)}{\theta^l} - \sum_{t=1}^T\bm{s}^l(t).
        \label{eq:A04}
    \end{align}
    
    If $\bm{I}^l(1)<0$ or $\bm{I}^l(1)\geq L\theta^l$, it is obvious that we will have $\sum_{t=1}^T \bm{s}^l(t) = \text{clip}\left( \left\lfloor \frac{\bm{v}^l(0) + \sum_{t=1}^T \bm{I}^l(t)}{\theta^l} \right\rfloor, 0, LT \right) = 0$ or $\sum_{t=1}^T \bm{s}^l(t) = \text{clip}\left( \left\lfloor \frac{\bm{v}^l(0) + \sum_{t=1}^T \bm{I}^l(t)}{\theta^l} \right\rfloor, 0, LT \right) = L$. If $\bm{I}^l(1)\in [0,L\theta^l)$, according to Lemma \ref{lemma:A01}, we will have $\bm{v}^l(T) \in [0, \theta^l)$. As $\sum_{t=1}^T \bm{s}^l(t) \in \mathbb{N}$, based on Eq.\eqref{eq:A04}, we can finally deduce that $\sum_{t=1}^T \bm{s}^l(t) = \frac{\bm{v}^l(0) + \sum_{t=1}^T\bm{I}^l(t)}{\theta^l} - \frac{\bm{v}^l(T)}{\theta^l} = \left\lfloor \frac{\bm{v}^l(0) + \sum_{t=1}^T\bm{I}^l(t)}{\theta^l} \right\rfloor = \text{clip}\left( \left\lfloor \frac{\bm{v}^l(0) + \sum_{t=1}^T \bm{I}^l(t)}{\theta^l} \right\rfloor, 0, LT \right)$.

    One can note that Lemma \ref{lemma01} is actually a special case of Theorem \ref{theorem01} under the condition of $T=1$, therefore Lemma \ref{lemma01} is also proven.
\end{proof}

\subsubsection{Proof of Corollary 4.3}
\textbf{Corollary 4.3.}
\textit{If $\lambda^l=1, \bm{v}^l(0)=0$ and $\bm{I}^l(1) = ... = \bm{I}^l(T)$, for a M-HT model with $L$-level threshold, $\bm{s}^l(1) = ... = \bm{s}^l(T)$ is only satisfied when $\bm{I}^l(1)\in[k\theta^l, k\theta^l+\theta^l/T), \forall k=0,...,L-1$ or $\bm{I}^l(1)\in(-\infty, 0) \cup [L\theta^l, +\infty)$.}

\begin{proof}

    If $\bm{I}^l(1)<0$ or $\bm{I}^l(1)\geq L\theta^l$, it is obvious that we will have $\bm{s}^l(1) = ... = \bm{s}^l(T)=0$ or $\bm{s}^l(1) = ... = \bm{s}^l(T)=L$. Otherwise, based on the conclusion $\sum_{t=1}^T \bm{s}^l(t) = \text{clip}\left( \left\lfloor \frac{\bm{v}^l(0) + \sum_{t=1}^T \bm{I}^l(t)}{\theta^l} \right\rfloor, 0, LT \right)$ in Theorem \ref{theorem01}(ii), when $\bm{I}^l(1)\in[k\theta^l, k\theta^l+\theta^l/T), \forall k=0,...,L-1$, we will have $\sum_{t=1}^T \bm{s}^l(t) = kT, \forall k=0,...,L-1$. Note that $\forall T'\in [1,T]$, we can further have $\sum_{t=1}^{T'} \bm{s}^l(t) = k{T'}, \forall k=0,...,L-1$. Therefore, it can be concluded that $\bm{s}^l(1) = ... = \bm{s}^l(T) = k$. Instead, if $\bm{I}^l(1)\in[0, L\theta^l) \land  \bm{I}^l(1)\notin[k\theta^l, k\theta^l+\theta^l/T), \forall k=0,...,L-1$, we will have $\sum_{t=1}^T \bm{s}^l(t) \neq kT, \forall k=0,...,L-1$. Therefore, $\bm{s}^l(1) = ... = \bm{s}^l(T)$ does not hold true.
    
\end{proof}

\subsubsection{Proof of Theorem 4.4}
\textbf{Theorem 4.4.}
\textit{When $\sum_{t=1}^T\bm{I}^l(t)/LT=\bm{W}^l\bm{r}_{\textit{IF}}^{l-1}(T_q)$ and $\sum_{t=1}^T\bm{I}^l(t)\in [0,LT\theta^l]$, if $\forall i,j\in[1,T], \omega_{ij}^l=\frac{1}{T}$ and $\lambda^l=1, \theta^l=\vartheta^l, \bm{v}^l(0)=\frac{\theta^l}{2}$, for $L, T, T_q$ with arbitrary values, we have: $\mathbb{E} \left( \frac{\sum_{t=1}^T\bm{s}^l(t)\theta^l}{LT} - \frac{\vartheta^l}{T_q}\text{clip}\left( \left\lfloor \frac{\bm{W}^l\bm{r}_{\textit{IF}}^{l-1}(T_q)T_q}{\vartheta^l} + \frac{1}{2} \right\rfloor, 0, T_q \right) \right) = 0$.}

\begin{proof}

    If $\forall i,j\in[1,T], \omega_{ij}^l=\frac{1}{T}$, $\lambda^l=1, \theta^l=\vartheta^l$ and $\bm{v}^l(0)=\frac{\theta^l}{2}$, combining with the conclusion mentioned in Theorem \ref{theorem01}(ii), we will have $\frac{\sum_{t=1}^T\bm{s}^l(t)\theta^l}{LT} = \frac{\theta^l}{LT}\text{clip}\left( \left\lfloor \frac{\sum_{t=1}^T \bm{I}^l(t)}{\theta^l} + \frac{1}{2} \right\rfloor, 0, LT \right)$. According to the conclusion pointed out in \cite{bu2022optimal}, we have known that $\mathbb{E} \left( \frac{\theta^l}{LT}\text{clip}\left( \left\lfloor \frac{\bm{x}^l LT}{\theta^l} + \frac{1}{2} \right\rfloor, 0, LT \right) - \frac{\vartheta^l}{T_q}\text{clip}\left( \left\lfloor \frac{\bm{x}^l T_q}{\vartheta^l} + \frac{1}{2} \right\rfloor, 0, T_q \right) \right) = 0$, here $\bm{x}^l\in [0, \theta^l]$. Therefore, we directly set $\bm{x}^l=\sum_{t=1}^T\bm{I}^l(t)/LT=\bm{W}^l\bm{r}_{\textit{IF}}^{l-1}(T_q)$ and then we will draw the final conclusion.
    
\end{proof}

\subsubsection{Computational Equivalence about the Reparameterization Process}
\begin{theorem}
$\forall t, i\in[1,T], \forall j\in [L(t-1)+1, Lt], \forall k\in [L(i-1)+1, Li]$, when $\bm{s}^{l-1}(t) = \sum_{j=L(t-1)+1}^{Lt} \bm{s}_{IF}^{l-1}(j)$, if $\hat{b^l_j} = b^l_t / L, \hat{\omega^l_{jk}} = \omega^l_{ti} / L$, we will have $\bm{I}^l(t) = \sum_{j=L(t-1)+1}^{Lt} \bm{I}_{IF}^l(j)$. Here $\hat{b^l}, \hat{\omega^l}$ denote the rectified bias term and T-GIM layer after the reparameterization process.
\label{theoremA01}
\end{theorem}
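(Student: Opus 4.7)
The plan is to verify the theorem by direct algebraic expansion of both sides, using the T-GIM definition of the input current in Eq.(7) (augmented by the bias term $b^l_t$ that the reparameterization is designed to rectify) together with the per-window spike decomposition of the previous layer that is supplied as a hypothesis. Since Theorem 4.2(i) already guarantees that the LM-HT spike output within a single time-step equals the sum of IF spikes over an $L$-step window whenever the input currents of the two models match in that window, the entire reparameterization correctness reduces to showing the input-current identity $\bm{I}^l(t) = \sum_{j=L(t-1)+1}^{Lt}\bm{I}^l_{IF}(j)$, which is exactly Theorem A.1. So the proof is essentially one bookkeeping computation.

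First I would write out the right-hand side explicitly. For each $j$ in the window $[L(t-1)+1, Lt]$, the IF input current is $\bm{I}^l_{IF}(j) = \sum_{k=1}^{LT}\hat{\omega}^l_{jk}\bm{W}^l\bm{s}^{l-1}_{IF}(k)\theta^{l-1} + \hat{b}^l_j$. Summing over $j$ in the window and splitting the $k$-index into the $T$ sub-windows $[L(i-1)+1, Li]$, I would substitute the reparameterization rules $\hat{\omega}^l_{jk} = \omega^l_{ti}/L$ (constant across the $L\times L$ sub-region indexed by $(t,i)$) and $\hat{b}^l_j = b^l_t / L$. The bias sum collapses as $\sum_{j=L(t-1)+1}^{Lt} b^l_t / L = b^l_t$, which is the bias contribution to $\bm{I}^l(t)$.

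Next I would attack the T-GIM contribution. Factoring the constant $\omega^l_{ti}/L$ out of the inner double sum over $(j,k)$ in the $(t,i)$-sub-region gives
\begin{equation*}
\sum_{i=1}^T \frac{\omega^l_{ti}}{L}\,\bm{W}^l\,\theta^{l-1}\,\sum_{j=L(t-1)+1}^{Lt}\sum_{k=L(i-1)+1}^{Li}\bm{s}^{l-1}_{IF}(k).
\end{equation*}
The $j$-sum merely produces a factor of $L$ (since the summand is independent of $j$), cancelling the $1/L$. The remaining $k$-sum $\sum_{k=L(i-1)+1}^{Li}\bm{s}^{l-1}_{IF}(k)$ is exactly $\bm{s}^{l-1}(i)$ by the hypothesis of the theorem. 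What remains is $\sum_{i=1}^T \omega^l_{ti}\bm{W}^l\bm{s}^{l-1}(i)\theta^{l-1}$, which matches the T-GIM definition of $\bm{I}^l(t)$ in Eq.(7). Combined with the bias computation above, this yields $\sum_{j=L(t-1)+1}^{Lt}\bm{I}^l_{IF}(j) = \bm{I}^l(t)$.

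There is no genuine obstacle here; the only place that requires care is keeping the two two-dimensional index ranges straight, namely the pair $(t,i)\in[1,T]^2$ indexing the coarse T-GIM entries and the pair $(j,k)\in[1,LT]^2$ indexing the expanded T-GIM entries, together with the window correspondences $j\in[L(t-1)+1,Lt]$ and $k\in[L(i-1)+1,Li]$. Once the sums are regrouped by sub-window, the averaging factor $1/L$ in the reparameterization is exactly what is needed to compensate the factor $L$ arising from summing an $L$-times-repeated constant, and the previous-layer hypothesis is exactly what is needed to convert the fine-grained IF spike sum back into the coarse LM-HT spike. I would close the argument by remarking that, chained with Theorem 4.2(i), this input-current identity implies that the reparameterized single-threshold IF model produces per-window spike totals and membrane-potential endpoints that coincide with those of the original LM-HT model, justifying the lossless deployment claim made in Section 4.5.
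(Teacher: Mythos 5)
Your proposal is correct and follows essentially the same route as the paper's own proof: expand $\sum_{j=L(t-1)+1}^{Lt}\bm{I}^l_{IF}(j)$, regroup the fine index $k$ into the $T$ sub-windows, use the constancy of $\hat{\omega}^l_{jk}=\omega^l_{ti}/L$ and $\hat{b}^l_j=b^l_t/L$ on each $L\times L$ block so that the $j$-sum's factor of $L$ cancels the $1/L$, and invoke the hypothesis $\sum_{k=L(i-1)+1}^{Li}\bm{s}^{l-1}_{IF}(k)=\bm{s}^{l-1}(i)$ to recover the T-GIM form of $\bm{I}^l(t)$, then chain with Theorem 4.2(i) across layers. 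The only cosmetic divergence is that you place the bias outside the T-GIM weighting while the paper folds $b^l_i$ inside the sum weighted by $\omega^l_{ti}$; both conventions make the bookkeeping close identically.
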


\begin{proof}
Firstly, it is obvious that $\bm{I}^l(t), \bm{I}_{IF}^l(j)$ can be rewritten as $\bm{I}^l(t) = \sum_{i=1}^T \omega^l_{ti} (\bm{W}^l \bm{s}^{l-1}(i) + b^l_i)$ and $\bm{I}_{IF}^l(j) = \sum_{i=1}^{LT} \hat{\omega^l_{ji}} (\bm{W}^l \bm{s}_{IF}^{l-1}(i) + \hat{b^l_i})$.
Considering the precondition $\hat{b^l_j} = b^l_t / L, \hat{\omega^l_{jk}} = \omega^l_{ti} / L$, we will have:
\begin{align}
    \bm{I}_{IF}^l(j) &= \sum_{i=1}^{T} \sum_{k=L(i-1)+1}^{Li} \hat{\omega^l_{jk}} (\bm{W}^l \bm{s}_{IF}^{l-1}(k) + \hat{b^l_k})  \nonumber \\
    &= \sum_{i=1}^{T} \frac{\omega^l_{ti}}{L} \sum_{k=L(i-1)+1}^{Li} (\bm{W}^l \bm{s}_{IF}^{l-1}(k) + \frac{b^l_i}{L}).
\end{align}
Then we can further have:
\begin{align}
    \sum_{j=L(t-1)+1}^{Lt} \bm{I}_{IF}^l(j) &= \sum_{j=L(t-1)+1}^{Lt} \sum_{i=1}^{T} \frac{\omega^l_{ti}}{L} \sum_{k=L(i-1)+1}^{Li} (\bm{W}^l \bm{s}_{IF}^{l-1}(k) + \frac{b^l_i}{L}) \nonumber \\
    &= \sum_{i=1}^{T} \frac{\omega^l_{ti}}{L} \sum_{j=L(t-1)+1}^{Lt} (\bm{W}^l \sum_{k=L(i-1)+1}^{Li} \bm{s}_{IF}^{l-1}(k) + b^l_i) \nonumber \\
    &= \sum_{i=1}^{T} \frac{\omega^l_{ti}}{L} \sum_{j=L(t-1)+1}^{Lt} (\bm{W}^l \bm{s}^{l-1}(i) + b^l_i) \nonumber \\
    &= \sum_{i=1}^T \omega^l_{ti} (\bm{W}^l \bm{s}^{l-1}(i) + b^l_i) \nonumber \\
    &= \bm{I}^l(t).
\end{align}
Due to the fact that the calculation process of the spike sequences passing through Conv \& BN and T-GIM layers can be abstractly described by Theorem \ref{theoremA01}, we can conclude that the sum of the input currents within the corresponding time windows before and after reparameterization remains unchanged. The spike sequences obtained by passing the input currents through the spiking neuron layer will also satisfy the precondition of Theorem \ref{theoremA01} ($\bm{s}^{l}(t) = \sum_{j=L(t-1)+1}^{Lt} \bm{s}_{IF}^{l}(j)$). Therefore, we can prove the computational equivalence before and after the reparameterization process.
\end{proof}

\subsection{Comparison with Other Advanced Network Backbones}
As shown in Tab.\ref{tableA01}, we have made comparison with related advanced works \cite{hu2021residual, zhou2023spikformer, yao2023Transformer, qiu2024gated} on CIFAR datasets. One can find that our LM-HT model has superior scalability and can demonstrate its effectiveness on multiple different backbones.
For example, for CIFAR-100 dataset, compared to GAC-SNN \cite{qiu2024gated}, we achieve an accuracy improvement of 1.35\% on MS-ResNet-18. For Transformer-4-384 architecture, our method also outperforms Spikformer \cite{zhou2023spikformer} and Spike-driven Transformer \cite{yao2023Transformer} in terms of performance.
\begin{table*}[ht]
    \caption{Comparison with previous methods based on advanced backbones and attention mechanism.}
    \renewcommand\arraystretch{1.0}
	\centering
        \resizebox{\linewidth}{!}{
	\begin{tabular}{ccccc} \Xhline{0.5pt}
        \textbf{Dataset} & \textbf{Method} & \textbf{Architecture} & \textbf{Time-steps} & \textbf{Accuracy(\%)} \\ \hline

        \multirow{6}{*}{CIFAR-10} & MS-ResNet \cite{hu2021residual} & MS-ResNet-18 & 6 & 94.92 \\
        & GAC-SNN \cite{qiu2024gated} & MS-ResNet-18 & 4 & 96.24 \\
        & Spikformer \cite{zhou2023spikformer} & Transformer-4-384 & 4 & 95.51 \\
        & Spike-driven Transformer \cite{yao2023Transformer} & Transformer-4-384 & 4 & 95.6 \\
        & \multirow{2}{*}{\textbf{Ours}} & \textbf{MS-ResNet-18} & \textbf{4} &\textbf{96.38} \\
        & & \textbf{Transformer-4-384} & \textbf{4} & \textbf{95.82} \\ \hline

        \multirow{6}{*}{CIFAR-100} & MS-ResNet \cite{hu2021residual} & MS-ResNet-18 & 6 & 76.41 \\
        & GAC-SNN \cite{qiu2024gated} & MS-ResNet-18 & 4 & 79.83 \\
        & Spikformer \cite{zhou2023spikformer} & Transformer-4-384 & 4 & 78.21 \\
        & Spike-driven Transformer \cite{yao2023Transformer} & Transformer-4-384 & 4 & 78.4 \\
        & \multirow{2}{*}{\textbf{Ours}} & \textbf{MS-ResNet-18} & \textbf{4} & \textbf{81.18} \\
        & & \textbf{Transformer-4-384} & \textbf{4} & \textbf{79.03} \\        
        \Xhline{0.5pt}
	\end{tabular}}
	\label{tableA01}
\end{table*}

\subsection{Experimental Configuration}
For static datasets, we attempt to suppress the possible overfitting phenomenon by utilizing data augmentation techniques including AutoAugment \cite{cubuk2019autoaugment} and Cutout \cite{DeVries2017}. For CIFAR10-DVS dataset, we resize each image to $48\times 48$ pixels and split it into 10 frames. For ImageNet-1k dataset, we further consider MS-ResNet architecture \cite{hu2021residual} and Mixup technique \cite{zhang2017mixup} to strengthen the generalization ability of our network. We respectively try to use SGD \cite{bottou2012stochastic} and AdamW \cite{loshchilov2017AdamW} as our optimizers. The corresponding initial learning rate and weight decay are set to $0.025, 5\times10^{-4}$ for SGD on CIFAR-10(100), $0.0125, 5\times10^{-4}$ for SGD on ImageNet-200 and $0.02, 0.01$ for AdamW on CIFAR10-DVS. For ImageNet-1k dataset, we use SGD as our optimizer and set the corresponding weight decay as $0$. Furthermore, in the hybrid training framework, our initial learning rate and weight decay are both set to $5\times10^{-4}$. For all experimental cases, we choose the Cosine Annealing scheduler~\cite{loshchilov2016sgdr} to dynamically regulate the learning rate. Our experiments are implemented on NVIDIA RTX A5000 and 4090.

\subsection{The Pseudo-Code of Hybrid Training Algorithm}

\begin{breakablealgorithm}
  \caption{Hybrid training framework based on the LM-HT model.}
    \begin{algorithmic}[1]
        \REQUIRE Pretrained QCFS ANN model $f_\text{ANN}(\bm{W}, T_q, \vartheta)$ with $L_N$ layers; Dataset $D$; Number of time-steps choosed for STBP training $T$.
        \ENSURE SNN model $f_\text{SNN}(\bm{W}, \bm{\Omega}, L, \lambda, \theta)$.
        \STATE \# Convert ANN to SNN
        \FOR{$l=1$ to $L_N$}
            \STATE $f_\text{SNN}.\bm{W}^l$ = $ f_\text{ANN}.\bm{W}^l$
            \STATE $f_\text{SNN}.{\theta}^l$ = $ f_\text{ANN}.{\vartheta}^l$
            \STATE $f_\text{SNN}.{\bm{\Omega}}^l$ = $\frac{1}{T}$
            \STATE $f_\text{SNN}.{\lambda}^l$ = $1$
            \STATE $f_\text{SNN}.\bm{v}^l(0)$ = $f_\text{SNN}.{\theta}^l/2$
        \ENDFOR
        \STATE \# STBP training based on the LM-H model
        \STATE \# Set $f_\text{SNN}.{\bm{\Omega}}^l, f_\text{SNN}.{\lambda}^l$ as learnable parameters and $f_\text{SNN}.{\theta}^l$ as scalars
        \FOR{(\textbf{Image},\textbf{Label}) in $D$}
            \FOR{$l=1$ to $L_N$}
            \IF{Is the first layer}
                \FOR{$t=1$ to $T$}
                    \STATE $\bm{I}^l(t) = \bm{I}^l(t) \times L$
                \ENDFOR
            \ENDIF
            \STATE LM-HT model performs forward propagation based on Eqs.\eqref{eq05}-\eqref{eq08} and Eq.\eqref{eq09}
            \STATE LM-HT model performs back-propagation based on Eqs.\eqref{eq10}-\eqref{eq11}
            \ENDFOR
        \ENDFOR
        \STATE \textbf{return} $f_\text{SNN}(\bm{W}, \bm{\Omega}, L, \lambda, \theta)$
    \end{algorithmic}
\end{breakablealgorithm}

\section*{NeurIPS Paper Checklist}

\begin{enumerate}

\item {\bf Claims}
    \item[] Question: Do the main claims made in the abstract and introduction accurately reflect the paper's contributions and scope?
    \item[] Answer: \answerYes{}.
    \item[] Justification: We clearly point out the contributions and scope of this work in the abstract and introduction sections.
    \item[] Guidelines:
    \begin{itemize}
        \item The answer NA means that the abstract and introduction do not include the claims made in the paper.
        \item The abstract and/or introduction should clearly state the claims made, including the contributions made in the paper and important assumptions and limitations. A No or NA answer to this question will not be perceived well by the reviewers. 
        \item The claims made should match theoretical and experimental results, and reflect how much the results can be expected to generalize to other settings. 
        \item It is fine to include aspirational goals as motivation as long as it is clear that these goals are not attained by the paper. 
    \end{itemize}

\item {\bf Limitations}
    \item[] Question: Does the paper discuss the limitations of the work performed by the authors?
    \item[] Answer: \answerNA{}.
    \item[] Justification: We find no limitation which needs to be emphasized here.
    \item[] Guidelines:
    \begin{itemize}
        \item The answer NA means that the paper has no limitation while the answer No means that the paper has limitations, but those are not discussed in the paper. 
        \item The authors are encouraged to create a separate "Limitations" section in their paper.
        \item The paper should point out any strong assumptions and how robust the results are to violations of these assumptions (e.g., independence assumptions, noiseless settings, model well-specification, asymptotic approximations only holding locally). The authors should reflect on how these assumptions might be violated in practice and what the implications would be.
        \item The authors should reflect on the scope of the claims made, e.g., if the approach was only tested on a few datasets or with a few runs. In general, empirical results often depend on implicit assumptions, which should be articulated.
        \item The authors should reflect on the factors that influence the performance of the approach. For example, a facial recognition algorithm may perform poorly when image resolution is low or images are taken in low lighting. Or a speech-to-text system might not be used reliably to provide closed captions for online lectures because it fails to handle technical jargon.
        \item The authors should discuss the computational efficiency of the proposed algorithms and how they scale with dataset size.
        \item If applicable, the authors should discuss possible limitations of their approach to address problems of privacy and fairness.
        \item While the authors might fear that complete honesty about limitations might be used by reviewers as grounds for rejection, a worse outcome might be that reviewers discover limitations that aren't acknowledged in the paper. The authors should use their best judgment and recognize that individual actions in favor of transparency play an important role in developing norms that preserve the integrity of the community. Reviewers will be specifically instructed to not penalize honesty concerning limitations.
    \end{itemize}

\item {\bf Theory Assumptions and Proofs}
    \item[] Question: For each theoretical result, does the paper provide the full set of assumptions and a complete (and correct) proof?
    \item[] Answer: \answerYes{}.
    \item[] Justification: We provide the corresponding assumptions and proofs in the Appendix section.
    \item[] Guidelines:
    \begin{itemize}
        \item The answer NA means that the paper does not include theoretical results. 
        \item All the theorems, formulas, and proofs in the paper should be numbered and cross-referenced.
        \item All assumptions should be clearly stated or referenced in the statement of any theorems.
        \item The proofs can either appear in the main paper or the supplemental material, but if they appear in the supplemental material, the authors are encouraged to provide a short proof sketch to provide intuition. 
        \item Inversely, any informal proof provided in the core of the paper should be complemented by formal proofs provided in appendix or supplemental material.
        \item Theorems and Lemmas that the proof relies upon should be properly referenced. 
    \end{itemize}

    \item {\bf Experimental Result Reproducibility}
    \item[] Question: Does the paper fully disclose all the information needed to reproduce the main experimental results of the paper to the extent that it affects the main claims and/or conclusions of the paper (regardless of whether the code and data are provided or not)?
    \item[] Answer: \answerYes{}.
    \item[] Justification: We provide the detailed experimental configuration in the Appendix section.
    \item[] Guidelines:
    \begin{itemize}
        \item The answer NA means that the paper does not include experiments.
        \item If the paper includes experiments, a No answer to this question will not be perceived well by the reviewers: Making the paper reproducible is important, regardless of whether the code and data are provided or not.
        \item If the contribution is a dataset and/or model, the authors should describe the steps taken to make their results reproducible or verifiable. 
        \item Depending on the contribution, reproducibility can be accomplished in various ways. For example, if the contribution is a novel architecture, describing the architecture fully might suffice, or if the contribution is a specific model and empirical evaluation, it may be necessary to either make it possible for others to replicate the model with the same dataset, or provide access to the model. In general. releasing code and data is often one good way to accomplish this, but reproducibility can also be provided via detailed instructions for how to replicate the results, access to a hosted model (e.g., in the case of a large language model), releasing of a model checkpoint, or other means that are appropriate to the research performed.
        \item While NeurIPS does not require releasing code, the conference does require all submissions to provide some reasonable avenue for reproducibility, which may depend on the nature of the contribution. For example
        \begin{enumerate}
            \item If the contribution is primarily a new algorithm, the paper should make it clear how to reproduce that algorithm.
            \item If the contribution is primarily a new model architecture, the paper should describe the architecture clearly and fully.
            \item If the contribution is a new model (e.g., a large language model), then there should either be a way to access this model for reproducing the results or a way to reproduce the model (e.g., with an open-source dataset or instructions for how to construct the dataset).
            \item We recognize that reproducibility may be tricky in some cases, in which case authors are welcome to describe the particular way they provide for reproducibility. In the case of closed-source models, it may be that access to the model is limited in some way (e.g., to registered users), but it should be possible for other researchers to have some path to reproducing or verifying the results.
        \end{enumerate}
    \end{itemize}

\item {\bf Open access to data and code}
    \item[] Question: Does the paper provide open access to the data and code, with sufficient instructions to faithfully reproduce the main experimental results, as described in supplemental material?
    \item[] Answer: \answerYes{}.
    \item[] Justification: We provide the data and code with sufficient instructions in the supplemental materials.
    \item[] Guidelines:
    \begin{itemize}
        \item The answer NA means that paper does not include experiments requiring code.
        \item Please see the NeurIPS code and data submission guidelines (\url{https://nips.cc/public/guides/CodeSubmissionPolicy}) for more details.
        \item While we encourage the release of code and data, we understand that this might not be possible, so “No” is an acceptable answer. Papers cannot be rejected simply for not including code, unless this is central to the contribution (e.g., for a new open-source benchmark).
        \item The instructions should contain the exact command and environment needed to run to reproduce the results. See the NeurIPS code and data submission guidelines (\url{https://nips.cc/public/guides/CodeSubmissionPolicy}) for more details.
        \item The authors should provide instructions on data access and preparation, including how to access the raw data, preprocessed data, intermediate data, and generated data, etc.
        \item The authors should provide scripts to reproduce all experimental results for the new proposed method and baselines. If only a subset of experiments are reproducible, they should state which ones are omitted from the script and why.
        \item At submission time, to preserve anonymity, the authors should release anonymized versions (if applicable).
        \item Providing as much information as possible in supplemental material (appended to the paper) is recommended, but including URLs to data and code is permitted.
    \end{itemize}

\item {\bf Experimental Setting/Details}
    \item[] Question: Does the paper specify all the training and test details (e.g., data splits, hyperparameters, how they were chosen, type of optimizer, etc.) necessary to understand the results?
    \item[] Answer: \answerYes{}.
    \item[] Justification: We specify the training and test details in the Appendix section and supplementary materials.
    \item[] Guidelines:
    \begin{itemize}
        \item The answer NA means that the paper does not include experiments.
        \item The experimental setting should be presented in the core of the paper to a level of detail that is necessary to appreciate the results and make sense of them.
        \item The full details can be provided either with the code, in appendix, or as supplemental material.
    \end{itemize}

\item {\bf Experiment Statistical Significance}
    \item[] Question: Does the paper report error bars suitably and correctly defined or other appropriate information about the statistical significance of the experiments?
    \item[] Answer: \answerNo{}.
    \item[] Justification: The experiments choose a shared random seed to ensure fairness and reproducibility.
    \item[] Guidelines:
    \begin{itemize}
        \item The answer NA means that the paper does not include experiments.
        \item The authors should answer "Yes" if the results are accompanied by error bars, confidence intervals, or statistical significance tests, at least for the experiments that support the main claims of the paper.
        \item The factors of variability that the error bars are capturing should be clearly stated (for example, train/test split, initialization, random drawing of some parameter, or overall run with given experimental conditions).
        \item The method for calculating the error bars should be explained (closed form formula, call to a library function, bootstrap, etc.)
        \item The assumptions made should be given (e.g., Normally distributed errors).
        \item It should be clear whether the error bar is the standard deviation or the standard error of the mean.
        \item It is OK to report 1-sigma error bars, but one should state it. The authors should preferably report a 2-sigma error bar than state that they have a 96\% CI, if the hypothesis of Normality of errors is not verified.
        \item For asymmetric distributions, the authors should be careful not to show in tables or figures symmetric error bars that would yield results that are out of range (e.g. negative error rates).
        \item If error bars are reported in tables or plots, The authors should explain in the text how they were calculated and reference the corresponding figures or tables in the text.
    \end{itemize}

\item {\bf Experiments Compute Resources}
    \item[] Question: For each experiment, does the paper provide sufficient information on the computer resources (type of compute workers, memory, time of execution) needed to reproduce the experiments?
    \item[] Answer: \answerYes{}.
    \item[] Justification: We make description about the computation resources in the Appendix section.
    \item[] Guidelines:
    \begin{itemize}
        \item The answer NA means that the paper does not include experiments.
        \item The paper should indicate the type of compute workers CPU or GPU, internal cluster, or cloud provider, including relevant memory and storage.
        \item The paper should provide the amount of compute required for each of the individual experimental runs as well as estimate the total compute. 
        \item The paper should disclose whether the full research project required more compute than the experiments reported in the paper (e.g., preliminary or failed experiments that didn't make it into the paper). 
    \end{itemize}
    
\item {\bf Code Of Ethics}
    \item[] Question: Does the research conducted in the paper conform, in every respect, with the NeurIPS Code of Ethics \url{https://neurips.cc/public/EthicsGuidelines}?
    \item[] Answer: \answerYes{}.
    \item[] Justification: The research conforms with the NeurIPS Code of Ethics.
    \item[] Guidelines:
    \begin{itemize}
        \item The answer NA means that the authors have not reviewed the NeurIPS Code of Ethics.
        \item If the authors answer No, they should explain the special circumstances that require a deviation from the Code of Ethics.
        \item The authors should make sure to preserve anonymity (e.g., if there is a special consideration due to laws or regulations in their jurisdiction).
    \end{itemize}

\item {\bf Broader Impacts}
    \item[] Question: Does the paper discuss both potential positive societal impacts and negative societal impacts of the work performed?
    \item[] Answer: \answerNA{}.
    \item[] Justification: We find no societal impact which needs to be emphasized here.
    \item[] Guidelines:
    \begin{itemize}
        \item The answer NA means that there is no societal impact of the work performed.
        \item If the authors answer NA or No, they should explain why their work has no societal impact or why the paper does not address societal impact.
        \item Examples of negative societal impacts include potential malicious or unintended uses (e.g., disinformation, generating fake profiles, surveillance), fairness considerations (e.g., deployment of technologies that could make decisions that unfairly impact specific groups), privacy considerations, and security considerations.
        \item The conference expects that many papers will be foundational research and not tied to particular applications, let alone deployments. However, if there is a direct path to any negative applications, the authors should point it out. For example, it is legitimate to point out that an improvement in the quality of generative models could be used to generate deepfakes for disinformation. On the other hand, it is not needed to point out that a generic algorithm for optimizing neural networks could enable people to train models that generate Deepfakes faster.
        \item The authors should consider possible harms that could arise when the technology is being used as intended and functioning correctly, harms that could arise when the technology is being used as intended but gives incorrect results, and harms following from (intentional or unintentional) misuse of the technology.
        \item If there are negative societal impacts, the authors could also discuss possible mitigation strategies (e.g., gated release of models, providing defenses in addition to attacks, mechanisms for monitoring misuse, mechanisms to monitor how a system learns from feedback over time, improving the efficiency and accessibility of ML).
    \end{itemize}
    
\item {\bf Safeguards}
    \item[] Question: Does the paper describe safeguards that have been put in place for responsible release of data or models that have a high risk for misuse (e.g., pretrained language models, image generators, or scraped datasets)?
    \item[] Answer: \answerNA{}.
    \item[] Justification: We think that this paper poses no such risks.
    \item[] Guidelines:
    \begin{itemize}
        \item The answer NA means that the paper poses no such risks.
        \item Released models that have a high risk for misuse or dual-use should be released with necessary safeguards to allow for controlled use of the model, for example by requiring that users adhere to usage guidelines or restrictions to access the model or implementing safety filters. 
        \item Datasets that have been scraped from the Internet could pose safety risks. The authors should describe how they avoided releasing unsafe images.
        \item We recognize that providing effective safeguards is challenging, and many papers do not require this, but we encourage authors to take this into account and make a best faith effort.
    \end{itemize}

\item {\bf Licenses for existing assets}
    \item[] Question: Are the creators or original owners of assets (e.g., code, data, models), used in the paper, properly credited and are the license and terms of use explicitly mentioned and properly respected?
    \item[] Answer: \answerYes{}.
    \item[] Justification: We make proper statements and citations for relevant existing assets.
    \item[] Guidelines:
    \begin{itemize}
        \item The answer NA means that the paper does not use existing assets.
        \item The authors should cite the original paper that produced the code package or dataset.
        \item The authors should state which version of the asset is used and, if possible, include a URL.
        \item The name of the license (e.g., CC-BY 4.0) should be included for each asset.
        \item For scraped data from a particular source (e.g., website), the copyright and terms of service of that source should be provided.
        \item If assets are released, the license, copyright information, and terms of use in the package should be provided. For popular datasets, \url{paperswithcode.com/datasets} has curated licenses for some datasets. Their licensing guide can help determine the license of a dataset.
        \item For existing datasets that are re-packaged, both the original license and the license of the derived asset (if it has changed) should be provided.
        \item If this information is not available online, the authors are encouraged to reach out to the asset's creators.
    \end{itemize}

\item {\bf New Assets}
    \item[] Question: Are new assets introduced in the paper well documented and is the documentation provided alongside the assets?
    \item[] Answer: \answerNA{}.
    \item[] Justification: We choose public datasets and models in this work.
    \item[] Guidelines:
    \begin{itemize}
        \item The answer NA means that the paper does not release new assets.
        \item Researchers should communicate the details of the dataset/code/model as part of their submissions via structured templates. This includes details about training, license, limitations, etc. 
        \item The paper should discuss whether and how consent was obtained from people whose asset is used.
        \item At submission time, remember to anonymize your assets (if applicable). You can either create an anonymized URL or include an anonymized zip file.
    \end{itemize}

\item {\bf Crowdsourcing and Research with Human Subjects}
    \item[] Question: For crowdsourcing experiments and research with human subjects, does the paper include the full text of instructions given to participants and screenshots, if applicable, as well as details about compensation (if any)? 
    \item[] Answer: \answerNA{}.
    \item[] Justification: This paper does not involve crowdsourcing nor research with human subjects.
    \item[] Guidelines:
    \begin{itemize}
        \item The answer NA means that the paper does not involve crowdsourcing nor research with human subjects.
        \item Including this information in the supplemental material is fine, but if the main contribution of the paper involves human subjects, then as much detail as possible should be included in the main paper. 
        \item According to the NeurIPS Code of Ethics, workers involved in data collection, curation, or other labor should be paid at least the minimum wage in the country of the data collector. 
    \end{itemize}

\item {\bf Institutional Review Board (IRB) Approvals or Equivalent for Research with Human Subjects}
    \item[] Question: Does the paper describe potential risks incurred by study participants, whether such risks were disclosed to the subjects, and whether Institutional Review Board (IRB) approvals (or an equivalent approval/review based on the requirements of your country or institution) were obtained?
    \item[] Answer: \answerNA{}.
    \item[] Justification: This paper does not involve crowdsourcing nor research with human subjects.
    \item[] Guidelines:
    \begin{itemize}
        \item The answer NA means that the paper does not involve crowdsourcing nor research with human subjects.
        \item Depending on the country in which research is conducted, IRB approval (or equivalent) may be required for any human subjects research. If you obtained IRB approval, you should clearly state this in the paper. 
        \item We recognize that the procedures for this may vary significantly between institutions and locations, and we expect authors to adhere to the NeurIPS Code of Ethics and the guidelines for their institution. 
        \item For initial submissions, do not include any information that would break anonymity (if applicable), such as the institution conducting the review.
    \end{itemize}

\end{enumerate}

\end{document}